\newtheorem{lemma}{Lemma}  
\newtheorem{theorem}{Theorem}
\newtheorem{proof}{Proof}
\title{Adaptive Guidance Accelerates Reinforcement Learning of Reasoning Models}
\author{%
  \textbf{Vaskar Nath}\thanks{Primary authors. Correspondence to \texttt{\{sean.hendryx, vaskar.nath\}@scale.com}.}\quad
  \textbf{Elaine Lau}\quad
  \textbf{Anisha Gunjal}\\[3pt]
  \textbf{Manasi Sharma}\quad
  \textbf{Nikhil Baharte}\quad
  \textbf{Sean Hendryx}\footnotemark[1]\\[6pt]
  Scale AI
}
\begin{document}
\maketitle

\begin{abstract}
   We study the process through which reasoning models trained with reinforcement learning on verifiable rewards (RLVR) can learn to solve new problems. We find that RLVR drives performance in two main ways:
   (1) by compressing pass@$k$ into pass@1 and (2) via "capability gain" in which models learn to solve new problems that they previously could not solve even at high $k$. We find that while capability gain exists across model scales, learning to solve new problems is primarily driven through self-distillation. We demonstrate these findings across model scales ranging from 0.5B to 72B parameters on >500,000 reasoning problems with prompts and verifiable final answers across math, science, and code domains. We further show that we can significantly improve pass@$k$ rates by leveraging \textit{natural language guidance} for the model to consider within context while still requiring the model to derive a solution chain from scratch. 
   Based of these insights, we derive $\text{Guide}$ -- a new class of online training algorithms. 
   $\text{Guide}$ adaptively incorporates hints into the model's context on problems for which all rollouts were initially incorrect and adjusts the importance sampling ratio for the "off-policy" trajectories in order to optimize the policy for contexts in which the hints are no longer present. We describe variants of $\text{Guide}$ for GRPO and PPO and empirically show that Guide-GRPO on 7B and 32B parameter models improves generalization over its vanilla counterpart with up to 4\% macro-average improvement across math benchmarks. We include careful ablations to analyze $\text{Guide}$'s components and theoretically analyze Guide's learning efficiency\footnote{Code will be open sourced.}.
\end{abstract}

\begin{flushright}
\textit{``I can only show you the door. You're the one that has to walk through it.''} \\
\vspace{0.1cm}
--- Morpheus, The Matrix
\end{flushright}

\section{Introduction}
Leading reasoning models on math, science, and coding benchmarks learn to utilize chain-of-thought via reinforcement learning with verifiable rewards (RLVR) \cite{openai2024o1, guo2025deepseekr1, lambert2024tulu3, havrilla2024teaching}. These models are optimized to maximize verifiable rewards by comparing predicted final answers to ground truth. Models trained with RLVR are capable of surpassing previous approaches (such as supervised finetuning (SFT) or reinforcement learning from human feedback (RLHF)) on challenging math and science benchmarks due to availability of verifiable rewards at scale. Yet the drivers of these gains—and how they evolve with model scale—remain poorly understood. Yue et al. \citep{yue2025does} attribute RLVR’s improvements almost entirely to the distillation of the base model’s existing knowledge. In this work, we instead formalize RLVR's improvements as a sum of two orthogonal effects—distillation and genuine capability gain—and investigate how each of these effects evolves as models scale.

Specifically, there are at least two ways to improve a language model's ability to solve challenging reasoning problems autonomously:
\begin{enumerate}[leftmargin=2em]
    \item By distilling knowledge from pass@$k$ into pass@1 \cite{zelikman2022star, gulcehre2023rest, openai2024rft, singh2024beyond, hosseini2024vstar, zhang2024restmcts}
    \item Capability gain via RL in which a language model learns to solve new problems it previously was not able to solve even when given $k$ attempts.
\end{enumerate}

In this work, we propose a formalism to measure the extent to which learning during RLVR is driven by self-distillation or capability gain. We then seek to leverage these insights to accelerate learning of new problems during RLVR by incorporating guidance into the reasoning model's context. We therefore address two main research questions:
\begin{enumerate}[leftmargin=2em]
  \item \textbf{Self-distillation or capability gain?}  To what extent is learning during RLVR merely redistributing probability mass among outputs the model already knows (“self-distillation”) versus genuinely expanding the model’s problem-solving capabilities?

  \item \textbf{Do guidance-conditioned trajectories on failure accelerate learning?}  
      If we give the policy selective guidance on complete problem failure, while requiring the trajectories to be generated by the same policy state (and therefore close to the on-policy distribution), can we close knowledge gaps faster than (a) using fully off-policy data, (b) providing no guidance at all, or (c) always providing guidance?
\end{enumerate}

Addressing these questions, our study yields three key contributions. First, we show that {\bf improvements during RLVR are primarily driven by self-distillation}: models learn to compress pass@$k$ into pass@1 by shifting probability mass toward answers they could already reach with multiple attempts. Second, we find that {\bf pass@$k$ itself can be significantly improved through selective guidance}: when the model fails all $k$ attempts, providing a hint in-context—while still requiring it to derive the reasoning chain from scratch—helps it discover successful trajectories that remain unreachable through naive sampling. Third, synthesizing these insights, {\bf we introduce \text{Guide}}, a training algorithm that uses guided rollouts on failure to increase pass@$k$, thereby expanding the pool of answers available for self-distillation. \text{Guide} accelerates learning in RLVR by turning unreachable solutions into reachable ones, and by carefully correcting the importance sampling ratio, we enable the model to subsequently learn them without guidance. We validate \text{Guide} across math benchmarks
and provide theoretical and empirical analysis of its learning efficiency.

\section{Methods}
\subsection{Self-Distillation vs. Capability Gain} 
We study the post-training dynamics that govern LLMs learning to solve new tasks. We measure this ability as the rewards $\mathcal{R}$ acquired from an environment, such as the test set of a benchmark. Specifically, we are interested in how an LLM learns to solve \textit{new} problems during RL. To this end, we define $\mathcal{R}^{net}$ as the sum of \textit{net new} rewards acquired \textit{after} RL for a policy $\pi_{\text{RL}}$
\begin{equation}
    \mathcal{R}^{\text{net}} = 
    \underbrace{\sum_{i \in U^{\pi_{\text{init}}}} \mathbb{I}[\hat{y}_i^{\pi_{\text{RL}}} = y_i]}_{\text{progress}} 
    - 
    \underbrace{\sum_{j \in S^{\pi_{\text{init}}}} \mathbb{I}[\hat{y}_j^{\pi_{\text{RL}}} \neq y_j]}_{\text{regression}}
\end{equation}

where $U^{\pi_{\text{init}}}$ is the set of indices of unsolved problems prior to RL and $S^{\pi_{\text{init}}}$ is the set of indices of solved problems prior to RL. We define solved and unsolved here via pass@1 correctness. Note that $\mathcal{R}^{net}$ can be calculated against both training data and test data and in practice is equal to the change in accuracy before and after training.

Note that the progress term can be decomposed into problems that have at least one correct solution in a sample
$ \mathcal{Y}_i = \{ \hat{y}_1, \dots, \hat{y}_k \}$
of $k$ responses from $\pi_{\text{init}}$ to the same prompt (i.e. pass@$k$ = 1) and problems that have no correct solutions in the sample (i.e. pass@$k$ = 0).
\begin{equation}
\underbrace{\sum_{i \in U^{\pi_{\text{init}}}} \mathbb{I}[\hat{y}_i^{\pi_{\text{RL}}} = y_i]}_{\text{progress}} 
=
\underbrace{\sum_{i \in U^{\pi_{\text{init}}}} \mathbb{I}\left[\exists\, \hat{y} \in \mathcal{Y}_i \text{ s.t. } \hat{y} = y_i \land\; \hat{y}_i^{\pi_{\text{RL}}} = y_i\right]}_{\text{distillation}} 
+
\underbrace{\sum_{i \in U^{\pi_{\text{init}}}} \mathbb{I}\left[\forall\, \hat{y} \in \mathcal{Y}_i,\, \hat{y} \neq y_i \;\land\; \hat{y}_i^{\pi_{\text{RL}}} = y_i\right]}_{\text{capability gain}}
\label{eq:progress}
\end{equation}

In order to understand how RLVR teaches models to solve new reasoning problems in practice, we set $k$ equal to the number of rollouts per problem used during training ($k$ may be set higher and we define effective vs. absolute capability gain in Appendix \S\ref{app:abs_cap_gain}).

Decomposing progress into the above terms enables us to understand the mechanisms driving RLVR. We empirically analyze these components in Section \ref{sec:rlvr_sd} and find that while effective capability gain exists, progress is dominated by self-distillation.

\subsection{Guide: Accelerating learning with guidance on failure}
Inspired by our empirical results showing that self-distillation dominates learning of new problems during RLVR (see Figure \ref{fig:distill_capgain_pct}), concurrent work showing similar results \cite{yue2025does}, and a rich history of success in RL of using off-policy data to improve training efficiency \cite{lillicrap2015continuous}, we seek to increase the proportion of correct rollouts during RL. We hypothesize that a particularly effective means to do this will be by \textit{guiding} the policy with a prompt-specifc hint, $h$, such that the model is required to reach the solution in its own terms: $\pi_\theta(o_{i,t} \mid q, h, o_{i,<t})$. In an initial validation of this hypothesis, we find that including hints significantly improves pass@k, as shown in Figure \ref{fig:untrained_guidance}.
To this end, we derive a new class of online RL training algorithms which we call \text{Guide}. We describe the general form and a specialization to PPO in Appendix \S \ref{app:Guide}. Further, we carefully analyze a specialization of \text{Guide} to GRPO in which we (1) provide guidance on unsolved prompts and (2) apply an off-policy importance weight
so that samples drawn with guidance still optimize performance \emph{without}
guidance, as shown in Algorithm \ref{alg:ggrpo}.

\paragraph{GRPO}  
In typical RLVR with GRPO, for each question \(q\), we sample \(k\) outputs \(\{o_i\}_{i=1}^{k}\) from the old policy \(\pi_{\theta_{\text{old}}}(\cdot\mid q)\) and score them, yielding rewards \(\{r_i\}_{i=1}^{k}\).
We apply per‑prompt \(z\)-normalization and set the token-level advantages $\hat{A}_{i,t}$ for all tokens $t$ in each output $o_i$ equal to the corresponding normalized reward: 
\begin{equation}
\hat A_{i,t} \;=\; \tilde r_i
\;=\; \frac{r_i-\mu_r}{\sigma_r},
\qquad
t = 1,\dots,|o_i|.
\label{eq:z_norm_adv}
\end{equation}

The GRPO objective maximized during policy updates is defined as:
\begin{equation}
\label{eq:grpo_objective}
\mathcal{J}_{\text{GRPO}}(\theta) = \mathbb{E}_{q\sim P(Q), \{o_i\}_{i=1}^{k}\sim\pi_{\theta_{\text{old}}}(o|q)}
\left[
\frac{1}{k}\sum_{i=1}^{k}\frac{1}{|o_i|}\sum_{t=1}^{|o_i|}\left\{
\min\left[\frac{\pi_{\theta}(o_{i,t}|q,o_{i,<t})}{\pi_{\theta_{\text{old}}}(o_{i,t}|q,o_{i,<t})}\hat{A}_{i,t},\right.\right.\right.
\end{equation}
\begin{equation*}    
\left.\left.\left.
\text{clip}\left(\frac{\pi_{\theta}(o_{i,t}|q,o_{i,<t})}{\pi_{\theta_{\text{old}}}(o_{i,t}|q,o_{i,<t})}, 1-\varepsilon, 1+\varepsilon\right)\hat{A}_{i,t}\right]
-\beta D_{KL}\left[\pi_\theta\|\pi_{\text{ref}}\right]\right\}\right],
\end{equation*}

where $\varepsilon$ and $\beta$ are hyperparameters controlling clipping and KL regularization, respectively.

\paragraph{\text{Guide}} We make the observation that because we want the model to perform well without guidance, the guided trajectories are off-policy. To avoid biasing the gradient, we should appropriately compute the importance weight (Sutton \& Barto 1998). To this end, we modify the GRPO objective to
\begin{equation}
\mathcal{J}(\theta) = 
\mathbb{E}_{q \sim P(Q)} 
\left[
\frac{1}{k} \sum_{i=1}^{k} \frac{1}{|o_i|} \sum_{t=1}^{|o_i|}
\left\{
\min\left[
\underbrace{
\textcolor{blue}{
\frac{\pi_\theta(o_{i,t} \mid q, o_{i,<t})}
     {\pi_{\theta_{\text{old}}}(o_{i,t} \mid q, h, o_{i,<t})}
}}_{\text{importance weight (off-policy correction)}}
\hat{A}_{i,t},\;
\right.\right.
\right.
\end{equation}
\[
\left.\left.\left.
\text{clip}\left(
\textcolor{blue}{
\frac{\pi_\theta(o_{i,t} \mid x_i, o_{i,<t})}
     {\pi_{\theta_{\text{old}}}(o_{i,t} \mid q, h, o_{i,<t})}
}, 1-\varepsilon, 1+\varepsilon\right)
\hat{A}_{i,t}
\right]
- \beta\, D_{\text{KL}}\bigl[\pi_\theta \,\|\, \pi_{\text{ref}}\bigr]
\right\}
\right],
\]
where $h$ indicates some guidance (or hint) suffix appended to the prompt $q$. We unify the two objective functions to form our objective $\mathcal{J}_{\text{Guide}}$

\pagebreak

\begin{equation}
\label{eq:J_guide}
\mathcal{J}_{\text{Guide}}(\theta)=
\mathbb{E}_{q \sim P(Q)}
\Biggl[
\frac{1}{k}\!
\sum_{r \in \mathcal{S}(q)}
\frac{1}{|r|}\sum_{t=1}^{|r|}
\Bigl\{
\min\!\Bigl[
\underbrace{
\textcolor{blue}{
\frac{\pi_\theta\!\bigl(r_t \mid x_q, r_{<t}\bigr)}
     {\pi_{\theta_{\text{old}}}\!\bigl(r_t \mid s_q, r_{<t}\bigr)}
}}_{\text{importance weight}}
\hat A_{r,t},
\;
\end{equation}
\[
\text{clip }\!\bigl(
\textcolor{blue}{
\frac{\pi_\theta(r_t \mid x_q, r_{<t})}
     {\pi_{\theta_{\text{old}}}(r_t \mid s_q, r_{<t})}
},
1-\varepsilon,1+\varepsilon
\bigr)\hat A_{r,t}
\Bigr] 
- \beta\,D_{\!KL}\!\bigl[\pi_\theta \,\|\, \pi_{\text{ref}}\bigr]
\Bigr\}
\Biggr],
\] 
\noindent where \(\mathcal{S}(q)\) is the set of \(k\) sampled rollouts for prompt \(q\), containing \(k\) plain rollouts \(r \sim \pi_{\theta_{\text{old}}}(\cdot \mid x_q)\) and, if all fail, \(k\) guided rollouts \(r \sim \pi_{\theta_{\text{old}}}(\cdot \mid \tilde x_q)\), where \(x_q\) and \(\tilde x_q\) are the plain and guided prompts respectively, \(s_q \in \{x_q, \tilde x_q\}\) is the prompt used to generate rollout \(r\), \(\hat A_{r,t}\) is the group-normalized advantage at token \(t\), and \(\varepsilon, \beta\) are the PPO-style clipping and KL-regularization hyperparameters. \footnote{We further re-weight importance ratios with an adaptive policy-reshaping detailed in Appendix \S\ref{app:policy_reshape}.}
\begin{algorithm}[H]
\caption{\textbf{Guide-GRPO}: Group Relative Policy Optimization with guidance-augmented rollouts on failure}
\label{alg:ggrpo}
\small
\textbf{Input:} initial policy $\pi_{\theta_\text{init}}$; reward model $r_\varphi$;  
task prompts $\mathcal D$; hyper‑parameters $\varepsilon,\,\beta,\,\mu$,  
\textcolor{blue}{$k$ roll‑outs per prompt, guidance suffix \texttt{guid}}
\begin{algorithmic}[1]
\State $\pi_\theta \gets \pi_{\theta_\text{init}}$
\For{$\textit{iter}=1,\dots,I$}
    \State $\pi_{\text{ref}} \gets \pi_\theta$      \Comment{freeze reference}
    \For{$\textit{step}=1,\dots,M$}
        \State Sample minibatch $\mathcal D_b \subset \mathcal D$
        \State $\pi_{\theta_\text{old}} \gets \pi_\theta$ \Comment{snapshot old policy}
        \State Sample $K$ outputs $\{o_i\}_{i=1}^{K}
               \sim\pi_{\theta_\text{old}}(\,\cdot\,|\,q)$ for every $q\in\mathcal D_b$
        \State \textcolor{blue}{Identify unsolved set
              $U=\{\,q\!\in\!\mathcal D_b :
              \text{\emph{all} $k$ roll‑outs fail}\}$}
        \For{$q \in U$}
            \State \textcolor{blue}{Sample $k$ guidance rollouts $\tilde o \sim \pi_{\theta_\text{old}}(\,\cdot\,|\,\langle q, \texttt{guid} \rangle)$}
        \EndFor
        \State Compute rewards $r_i = r_\varphi(o_i)$ (and $r_{\tilde o}$ if present)
        \State Compute advantages $\hat A_{i,t}$ via group‑relative estimation
        \For{$\textit{gstep}=1,\dots,\mu$}
            \State \textcolor{blue}{Update $\pi_\theta$ by maximising the Guide objective in Eq. \ref{eq:J_guide}}
        \EndFor
    \EndFor
\EndFor
\State \Return $\pi_\theta$
\end{algorithmic}
\textbf{Output:} fine‑tuned policy $\pi_\theta$
\end{algorithm}
\vspace{-1em}
Guide injects hints only when all unguided rollouts
fail, and an importance weight projects those off-policy trajectories
onto the on-policy gradient direction.  
This focuses learning signal on the hardest unsolved problems while
keeping every guided update aligned with the plain-prompt
objective, thereby achieving faster progress than
vanilla GRPO. We formalize this notion into the following theorem and provide a proof in Appendix \S\ref{app:guide-theory}:

\begin{theorem}[Guide-GRPO improves learning efficiency]
\label{thm:guide_better}
Let \(U\) be the set of prompts \(q\) unsolved by the current policy \(\pi_\theta\). Suppose that, in expectation over unsolved prompts and the group $G_q$ of guided and unguided trajectories, the guided advantage is positive:
\[
\mathbb{E}_{q \in U} \left[\, \mathbb{E}_{y \sim \pi_\theta(\cdot \mid \tilde{x}_q)} \left[\, \tilde{A}_q(y;\, G_q) \,\right] \,\right] > 0.
\]
Then for all \(\eta\) sufficiently small, the one-step expected improvement, \(\Delta \mathcal{R}\), under Guide-GRPO exceeds that of Vanilla GRPO, to first order in \(\eta\):
\vspace{0.5em}
\begin{equation}
\mathbb{E}[\Delta\mathcal{R}_{\text{\emph{Guide}}}] > \mathbb{E}[\Delta\mathcal{R}_{\text{\emph{Vanilla}}}],
\end{equation}
\vspace{0.25em}
where
\begin{align}
\mathbb{E}[\Delta\mathcal{R}_{\text{\emph{Vanilla}}}] &= \eta \sum_{q \in U} A_q\, p_q^2 + \mathcal{O}(\eta^2), \\
\mathbb{E}[\Delta\mathcal{R}_{\text{\emph{Guide}}}] &= \eta \sum_{q \in U} \left[ A_q\, p_q^2 + (1 - p_q)^k\, \mathbb{E}_{y \sim \pi_\theta(\cdot \mid \tilde x_q)}[\tilde A_q(y)]\, p_q \right] + \mathcal{O}(\eta^2),
\end{align}
and \(p_q = \mathbb{P}_{y \sim \pi_\theta(\cdot \mid x_q)}\left[f(y) = y_q^*\right]\) denotes the success probability under the unguided policy.
\end{theorem}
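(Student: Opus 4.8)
The plan is to model one optimizer step of each algorithm as gradient ascent $\theta \mapsto \theta + \eta\,\hat g$ on its objective, and to compare the first-order change in the pass@1 reward $\mathcal{R}(\theta) = \sum_{q\in U} p_q(\theta)$ with $p_q(\theta) = \mathbb{P}_{y\sim\pi_\theta(\cdot\mid x_q)}[f(y)=y_q^*]$. Taylor expansion gives $\Delta\mathcal{R} = \eta\,\langle\nabla_\theta\mathcal{R}(\theta),\,\hat g\rangle + \mathcal{O}(\eta^2)$, so after averaging over the rollout randomness, $\mathbb{E}[\Delta\mathcal{R}] = \eta\,\langle\nabla_\theta\mathcal{R},\,\mathbb{E}[\hat g]\rangle + \mathcal{O}(\eta^2)$, where $\nabla_\theta\mathcal{R} = \sum_{q\in U}\mathbb{E}_{y\sim\pi_\theta(\cdot\mid x_q)}\!\big[\mathbb{I}[f(y)=y_q^*]\,\nabla_\theta\log\pi_\theta(y\mid x_q)\big]$. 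The theorem thus reduces to computing $\mathbb{E}[\hat g]$ for the two objectives and pairing it with this vector.

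Next I would split the Guide gradient into the part that coincides with vanilla GRPO and a guided correction. Guidance is injected for $q$ only when all $k$ unguided rollouts fail --- probability exactly $(1-p_q)^k$ --- and on that same event vanilla GRPO produces a null update because the $z$-normalized advantages in Eq.~\eqref{eq:z_norm_adv} vanish when all rewards coincide; hence $\mathbb{E}[\hat g_{\text{Guide}}] = \mathbb{E}[\hat g_{\text{Vanilla}}] + \sum_{q\in U}(1-p_q)^k\,\mathbb{E}\big[\hat g_{\text{guided}}(q)\mid\text{triggered}\big]$. For the vanilla term, the per-group mean is a control variate, so $\mathbb{E}[\hat g_{\text{Vanilla}}]$ is a positive multiple of $\nabla_\theta p_q$ on each prompt; pairing with $\nabla_\theta p_q$ and using the two-outcome (correct/incorrect) abstraction of $\pi_\theta(\cdot\mid x_q)$ --- under which $\nabla_\theta p_q$ is itself proportional to $p_q$ --- yields $\mathbb{E}[\Delta\mathcal{R}_{\text{Vanilla}}] = \eta\sum_{q\in U}A_q\,p_q^2 + \mathcal{O}(\eta^2)$ with $A_q>0$ collecting the group-size factor, the normalization scale and the squared score norm.

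For the guided correction I would use the importance weight to change measure: the guided rollouts are sampled from $\pi_{\theta_{\text{old}}}(\cdot\mid\tilde x_q)$ but carry the ratio $\pi_\theta(\cdot\mid x_q)/\pi_{\theta_{\text{old}}}(\cdot\mid\tilde x_q)$, so differentiating at $\theta=\theta_{\text{old}}$, telescoping the per-token ratios to the sequence ratio $\pi_{\theta_{\text{old}}}(y\mid x_q)/\pi_{\theta_{\text{old}}}(y\mid\tilde x_q)$, and cancelling the sampling density rewrites the conditional expectation as one over $\pi_\theta(\cdot\mid x_q)$ with the mixed-group advantage $\tilde A_q(y;G_q)$ attached (PPO clipping is inactive to first order in $\eta$ and is dropped). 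Pairing with $\nabla_\theta\mathcal{R}$ and again invoking the two-outcome abstraction extracts the correction $(1-p_q)^k\,p_q\,\mathbb{E}_{y\sim\pi_\theta(\cdot\mid\tilde x_q)}[\tilde A_q(y;G_q)]$, giving the stated formula for $\mathbb{E}[\Delta\mathcal{R}_{\text{Guide}}]$. Finally, each per-prompt correction carries the nonnegative weight $(1-p_q)^k p_q$ (a prompt with $p_q=0$ contributes to neither algorithm), so the hypothesis $\mathbb{E}_{q\in U}\mathbb{E}_{y}[\tilde A_q]>0$ makes the aggregated correction strictly positive; taking $\eta$ small enough to dominate the $\mathcal{O}(\eta^2)$ remainder then gives $\mathbb{E}[\Delta\mathcal{R}_{\text{Guide}}] > \mathbb{E}[\Delta\mathcal{R}_{\text{Vanilla}}]$. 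The hypothesis is moreover natural: because the uniformly-failing unguided rollouts drag the mixed-group mean down to roughly half the guided success rate, every guided rollout gets expected advantage $\approx p_q^{\text{guid}}/(2\sigma)>0$ whenever guidance helps at all --- precisely the regime in Figure~\ref{fig:untrained_guidance}.

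The main obstacle is the off-policy bookkeeping in the guided correction: the importance ratio is genuinely not $1$ even at $\theta=\theta_{\text{old}}$ because numerator and denominator condition on different prompts, so one must verify carefully that the per-token product telescopes, that the result exactly implements the change of measure from $\pi_{\theta_{\text{old}}}(\cdot\mid\tilde x_q)$ to $\pi_{\theta_{\text{old}}}(\cdot\mid x_q)=\pi_\theta(\cdot\mid x_q)$, and that clipping and the randomness of $\tilde A_q(y;G_q)$ (which depends on the realized group) contribute only at order $\mathcal{O}(\eta^2)$ or can be replaced by conditional means. The one other place needing care is making the two-outcome abstraction precise --- arguing that the fine-grained distribution over correct trajectories affects only the constants $A_q$, not the $p_q$- and $p_q^2$-dependence --- which I would handle by carrying an exact score-function expression and factoring out $p_q = \pi_\theta(\{\text{correct}\}\mid x_q)$ at the end.
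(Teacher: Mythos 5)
Your proposal takes the same high-level route as the paper's proof in Appendix~\S\ref{app:guide-theory} — Taylor-expand the pass-rate $p_q(\theta)$, invoke the log-derivative trick, pass to a single-trajectory (``two-outcome'') abstraction so that $\nabla_\theta p_q \propto p_q\,\nabla_\theta\log\pi_\theta(y_q\mid x_q)$, substitute the update $g$ and pair it with $\nabla_\theta\mathcal{R}$ — but you fill in several steps the paper asserts or leaves implicit, and these are exactly the places a careful reader would balk. (i) The paper writes the Guide update directly as $g=\sum_q[A_q+(1-p_q)^k\tilde A_q]\nabla_\theta\log\pi_\theta(y_q\mid x_q)$ with the $(1-p_q)^k$ weight inserted by fiat; you instead derive it as the trigger probability that all $k$ plain rollouts fail, and pair it with the observation that on that same event the vanilla $z$-normalized advantages collapse to zero, so the guided contribution is a genuine additive correction with no double-counting. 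Neither observation is spelled out in the paper. (ii) The paper handles the off-policy correction by invoking Lemma~\ref{lem:alignment} separately and then silently dropping the importance weight $w_q$ from the main chain of equations; your telescoping-ratio / change-of-measure argument makes explicit that taking expectation over $y\sim\pi_{\theta_{\text{old}}}(\cdot\mid\tilde x_q)$ against $w_q$ reproduces the plain-prompt expectation at $\theta=\theta_{\text{old}}$, which is the step that actually justifies the whole construction. (iii) You close the gap between the paper's intermediate expression $\eta\sum_q p_q[A_q+(1-p_q)^k\mathbb{E}\tilde A_q]\,\|\nabla_\theta\log\pi_\theta(y_q\mid x_q)\|^2$ (Eq.~\ref{eq:deltaR_guide_2}) and the theorem's stated $\eta\sum_q[A_q p_q^2+(1-p_q)^k\mathbb{E}\tilde A_q\,p_q]$, which the paper never reconciles; your move of absorbing the squared score norm and group constants into a redefined positive $A_q$ and factoring a second $p_q$ out of $\nabla_\theta p_q$ is a legitimate way to do this, though note it silently changes $A_q$'s meaning from ``group-normalized advantage'' (which can be negative) to ``positive constant,'' and the theorem's guided term then has the two $p_q$ factors appearing asymmetrically, so the bookkeeping would need one more sentence to make the exponents line up cleanly. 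Overall your argument is a strictly more careful version of the paper's: same skeleton, but the probability-of-trigger decomposition and the explicit change of measure are genuine improvements, and the two-outcome abstraction is made load-bearing rather than tacit.

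One thing you flag as an obstacle but under-weight: the advantage $\tilde A_q(y;G_q)$ depends on the realized group $G_q$, which itself is drawn conditional on the trigger event, so replacing it by a conditional mean is not a first-order-in-$\eta$ matter — it is a modeling assumption that the paper also makes (Lemma~\ref{lem:selective} conditions on the event without further comment). You correctly note it ``can be replaced by conditional means,'' but if you were to flesh this out you would want to state it as an assumption rather than something that falls out of the $\mathcal{O}(\eta^2)$ remainder.
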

Note that Guide's relative gain over vanilla GRPO increases when
\begin{itemize}
  \item \emph{failure probability} $(1-p_q)^k$ is large (hard prompts),
  \item \emph{guided advantage} $\mathbb{E}[\tilde A_q]$  is large on average relative to the full rollout group,
  \item the success probability under the unguided policy $p_q$ is non-zero (so credit
        can propagate).
\end{itemize}

\vspace{-1mm}
\section{Experiments}
\label{sec:results}

\subsection{RLVR Drives Learning Progress Mainly via Self‑Distillation}
\label{sec:rlvr_sd}

We investigate the mechanisms driving performance improvements in models trained using RLVR, explicitly decomposing the observed improvements into two measurable effects: capability gains and self-distillation. Concretely, for the experiments in this section, we define capability gain and self-distillation as follows: 

\paragraph{Capability gain} \label{capability_gains_definition} The count of problems that are initially unsolved by the untrained policy, even with multiple attempts \texttt{pass@16}, which subsequently become solvable by the RLVR‑trained policy within a single sample (\texttt{pass@1}).
\footnote{In practice, we compute the unsolved set $U$ using pass@1 \& temperature 0, the \textit{potential} distill set $D$ with pass@16 \& temperature 1 (to mirror training settings), and \textit{potential} capability gain set $G = U - D$. Solved problems are defined by pass@1 temperature 0.}

\paragraph{Self‑distillation}\label{distillation_definition} The count of problems solvable by the untrained policy with multiple sampling attempts (\texttt{pass@16}) that later become solvable with just one attempt (\texttt{pass@1}) during RLVR training.

\begin{figure}[H]
  \centering
  \includegraphics[width=0.85\textwidth]{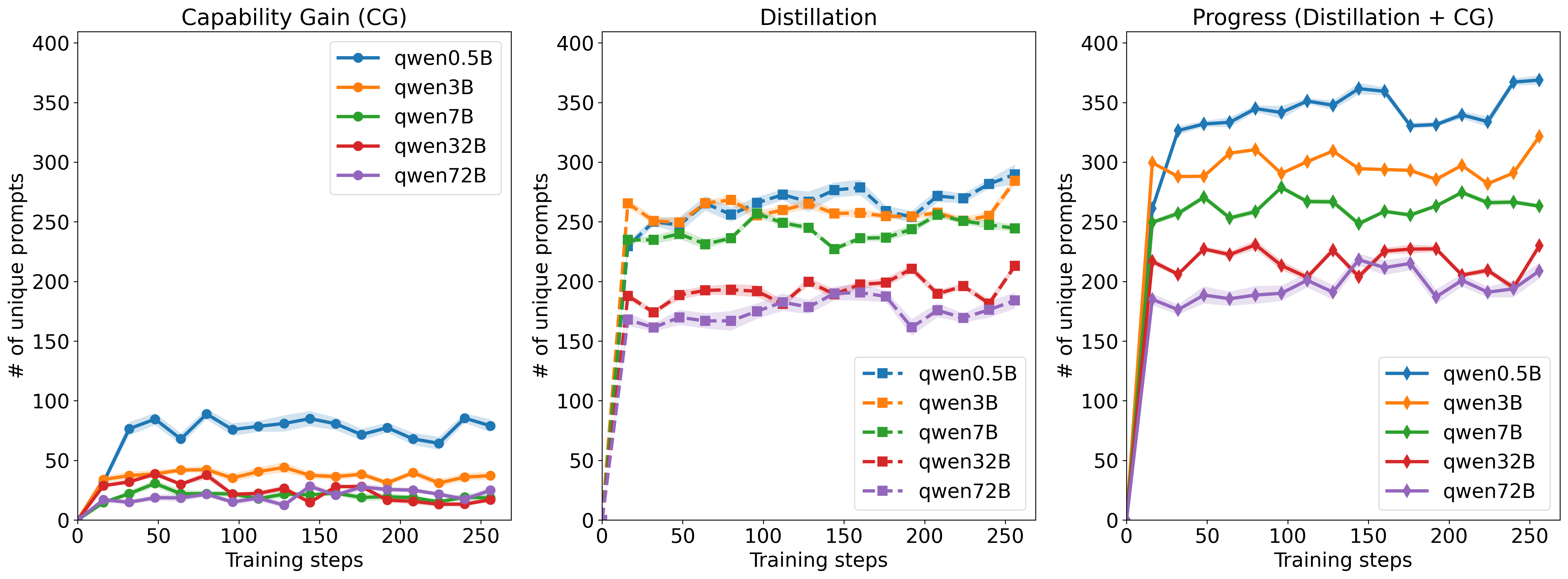}
  \caption{Capability gain (left), self-distillation (middle), and combined progress (capability gain + self-distillation; right) across training steps on all test sets.}
  \label{fig:distill_capgain_pct}
  \vspace{-3mm}
\end{figure}

\subsubsection{Experimental setup}

For our base models, we use Qwen 2.5 \cite{model:Qwen25} at five model scales, 0.5B, 3B 7B, 32B, and 72B as the starting untrained policies. Each run is trained for 256 steps using the GRPO training objective on a dataset composed of math, stem, and coding tasks. We evaluate every 16 training steps on the following benchmarks: \textsc{GSM8K} \cite{gsm8k}, \textsc{Math500} \cite{math500}, \textsc{AIME24} \cite{aime24}, \textsc{AIME25} \cite{aime25}, \textsc{AMC23} \cite{amc23}, \textsc{GPQA-Diamond} \cite{gpqa}, \textsc{OlympiadBench} \cite{olympiadbench}, \textsc{LeetCode} \cite{guo2024deepseek}, \textsc{LiveCodeBench} \cite{livecodebench}, and \textsc{HumanEval} \cite{humaneval}. To measure variance in capability gain and self-distillation across runs (as defined in \ref{capability_gains_definition}), we perform 10 independent trials, each with its own random seed. We first generate 100 rollouts at temperatures 1.0 and 0.0 for every problem in the full test set. Then for each trial, to compute pass@1, we randomly sampling one of the 100 temperature 0.0 rollouts and judge its correctness; to compute pass@16, we randomly sample 16 trajectories from the temperature 1.0 rollouts and judge if any of the sample are correct. We apply this sampling procedure independently across the 10 trials and aggregate results to report the overall mean and standard error of capability gain, distillation, and progress counts on the full test set. Additional training hyper-parameters and implementation details are provided in Appendix \S~\ref{app:rlvr_training_details}.

\subsubsection{Analysis}
Figure \ref{fig:distill_capgain_pct} decomposes net performance gains (Eq. \ref{eq:progress}) into capability gain and self-distillation. We make the following observations:  

\paragraph{Self-distillation dominates} Across all four Qwen sizes, the majority of the progress improvement comes from converting answers that were already reachable within $\le 16$ untrained samples into the trained pass@1 at temp 0. Among the models evaluated, Qwen 7B and Qwen 3B shows the highest gain via self-distillation, whereas the larger models (Qwen 32B and Qwen 72B) showed comparatively fewer gains from self-distillation. In contrast, every model learns to solve some problems it could not solve at initialization, with the 0.5B model gaining the most in relative terms. Nevertheless, capability gain remains a minority contributor at every scale, indicating that RLVR primarily re-allocates probability mass rather than discover truly novel solutions at the studied $k$.

\paragraph{Headroom dictates returns and shrinks with capability} We first note that the unsolved set $|U|$ contracts sharply as model size grows: 0.5B begins with 3195 unsolved items, 3B with 2150, 7B with 1913, 32B with 1617, and 72B with just 1532. Because each model converts a similar fraction of its own $|U|$ ($\approx25\%$), the absolute count of pass@1 lift (Figure \ref{fig:distill_capgain_pct}; right) inevitably drops at larger scale. Progress for stronger models therefore hinges on introducing harder examples that replenish $|U|$ and expose new reasoning gaps.

\subsection{Guide-GRPO towards mathematical reasoning}
\label{sec:guide-exp}

Leveraging the observation that the majority of the performance gain in RLVR training is from self-distillation, we seek to increase the proportion of correct trajectories during RL training while remaining close to the policy's sampling distribution. In this section, we first validate the hypothesis that prompt-specific guidance in the model's context improves pass@k (Figure \ref{fig:untrained_guidance}), and then utilize this improvement to empirically demonstrate Guide-GRPO's (Algorithm \ref{alg:ggrpo}) effectiveness towards improving mathematical reasoning for language policy models (Table \ref{tab:main_results} and Table \ref{tab:main_results_large}).

\begin{figure}[t!]
\vspace{-2mm}
  \centering
  \includegraphics[width=0.9\textwidth]{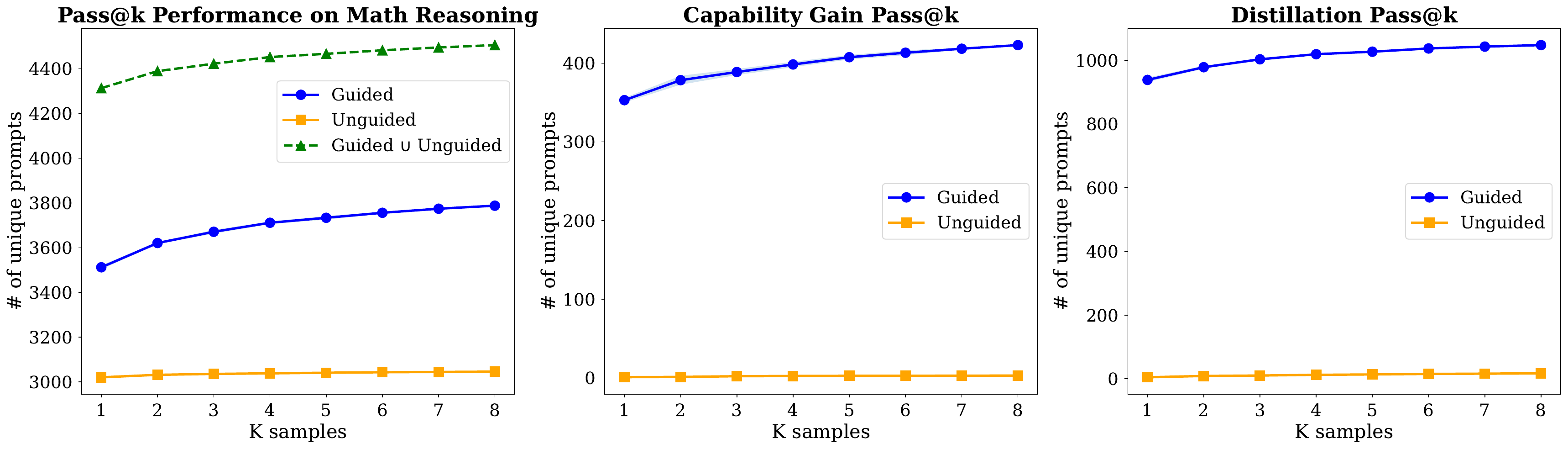}
  \caption{Impacts of guidance on correct rollouts. \textbf{Left}: Guidance vs. no-guidance pass@k performance on Qwen-2.5-Math-7B on 10K randomly sampled training examples from open-r1/OpenR1-Math-220k \cite{openr1}. Including problem-specific guidance into the context increases unbiased pass@$k$. \textbf{Middle}: Guided rollouts solve more previously unsolvable questions (capability gain), with gains growing in k. \textbf{Right}: Guidance also improves performance on the distillation subset in comparison to unguided model.}  
  \label{fig:untrained_guidance}
\end{figure}

\subsubsection{Experimental Setup}
\label{subsec:guide_experimental_setup}

Our training data consists of the default subset of OpenR1-Math-220k \cite{openr1}, comprising 93.7K math reasoning tasks. For each entry, we extract the prompt, ground-truth answer, and the human-authored reference solution. For guidance generation, we prompt GPT-4o to produce pedagogically-inspired hints that mimic expert tutoring strategies -- providing high-level conceptual direction and problem-solving frameworks without revealing solution paths (full instructions and guidance examples are in Appendix \S \ref{app:guidance_generation}). 

Our base model, Qwen-2.5-Math-7b \cite{model:Qwen25}, is a large language model pre-trained and fine-tuned for complex mathematical reasoning. We establish a comprehensive comparative framework: (1) standard GRPO training, (2) GRPO with Filtering -- a technique shown to improve training efficiency by discarding prompts for which the rollouts are all incorrect or all correct \cite{yu2025dapoopensourcellmreinforcement}, (3) our proposed Guide-GRPO approach, and (4) a supervised fine-tuning (SFT) baseline trained directly on human-authored solutions. This multi-faceted comparison allows us to evaluate whether transforming expert solutions into guided hints yields performance advantages over both direct imitation of expert reasoning and optimized reinforcement learning approaches. Moreover, to assess the robustness of our method to increasing computational resources, we conduct experiments that increase context length (4K $\xrightarrow{}$ 8K) followed by an increase in context length \textit{and} model size (7B $\xrightarrow{}$ 32B). Additional training hyper-parameters and implementation details are provided in Appendix \S \ref{app:guide-exp}.

\subsubsection{Results}

\begin{table}[H]
    \centering
    \caption{Comparison of Pass@1 (greedy decoding) and Pass@16 (temperature 1.0) performance on several math benchmarks across different training algorithms. SFT is trained on the reference solution, Filter-GRPO uses the standard GRPO objective with filtering of all incorrect and all correct groups, GRPO is without filtering, Base is the base model (Qwen-2.5-Math-7B), and Guide-GRPO is our method. The performance for Pass@1 is averaged over 5 independent samples. Table \ref{app:main_results_full} contains the full results with 95\% confidence intervals. Bold values indicate best performance.}
    \scalebox{0.9}{
    \begin{tabular}{llccccc}
    \toprule
    \textbf{Benchmark} & \textbf{Metric} & \textbf{Guide-GRPO} & \textbf{Filter-GRPO} & \textbf{GRPO} & \textbf{SFT} & \textbf{Base} \\
    \midrule
    \multirow{2}{*}{MATH500}     & P@1   & \textbf{82.68} & 80.80          & 79.00          & 72.80 & 68.80 \\
                                 & P@16  & \textbf{93.60} & 92.60          & 90.80          & 87.00 & 89.60 \\ \addlinespace[3pt]
    \multirow{2}{*}{GSM8K}       & P@1   & 91.43          & \textbf{91.84} & 91.71          & 88.22 & 83.21 \\
                                 & P@16  & \textbf{97.73} & 96.59          & 96.97          & 96.89 & 97.19 \\ \addlinespace[3pt]
    \multirow{2}{*}{MINERVA}     & P@1   & 32.35          & 30.51          & \textbf{32.72} & 26.25 & 26.47 \\
                                 & P@16  & \textbf{47.43} & 43.75          & 45.96          & 35.29 & 43.01 \\ \addlinespace[3pt]
    \multirow{2}{*}{OLYMPIAD}    & P@1   & \textbf{43.11} & 40.21          & 39.56          & 35.05 & 33.07 \\
                                 & P@16  & \textbf{64.59} & 60.89          & 61.33          & 50.52 & 52.15 \\ \addlinespace[3pt]
    \multirow{2}{*}{AMC}         & P@1   & \textbf{63.61} & 60.24          & 62.41          & 50.36 & 47.47 \\
                                 & P@16  & \textbf{84.34} & \textbf{84.34} & \textbf{84.34} & 69.88 & 78.31 \\ \addlinespace[3pt]
    \multirow{2}{*}{AIME 24}     & P@1   & \textbf{30.67} & 18.67          & 26.67          & 11.33 & 6.67  \\
                                 & P@16  & 56.67          & 53.33          & \textbf{60.00} & 23.33 & 33.33 \\ \addlinespace[3pt]
    \multirow{2}{*}{AIME 25}     & P@1   & \textbf{13.33} & \textbf{13.33} & \textbf{13.33} & 6.67  & 3.33  \\
                                 & P@16  & \textbf{46.67} & 33.33          & 30.00          & 13.33 & 13.33 \\ \midrule
    \multirow{2}{*}{Macro Avg.}  & P@1   & \textbf{51.03} & 47.94          & 49.34          & 40.42 & 38.43 \\ \addlinespace[3pt]
                                 & P@16  & \textbf{70.15} & 66.40          & 67.06          & 53.75 & 58.13 \\ \addlinespace[3pt]
    \multirow{2}{*}{Micro Avg.}  & P@1   & \textbf{70.66} & 69.76          & 69.59          & 63.80 & 61.16 \\
                                 & P@16  & \textbf{83.29} & 81.23          & 81.44          & 76.28 & 78.31 \\
    \bottomrule
    \end{tabular}
    }
    \label{tab:main_results}
\end{table}


\paragraph{Task-specific guidance increases correct rollouts} Figure \ref{fig:untrained_guidance} demonstrates that introducing targeted, in-context guidance significantly increases the number of correct rollouts. We then dissect how these hints affect both capability gain and self-distillation (see Section \ref{sec:rlvr_sd}). Our analysis reveals that guidance not only helps the model solve previously unreachable prompts (capability gain) but also reinforces consistency on already-solvable ones (self-distillation). Building on this insight, we apply Guide-GRPO (see Section \ref{sec:guide-exp}) to transfer performance improvements observed under guidance to directly improve the base policy. More details about this experimental setup are described in Appendix \ref{app:guide-exp}.

\paragraph{Guide-GRPO leads to better test-time performance} As shown in Table \ref{tab:main_results}, Guide-GRPO consistently outperforms all baselines across both pass@1 and pass@16 metrics on a wide range of math benchmarks. Notably, Guide-GRPO achieves a 3\% absolute improvement in pass@1 on Olympiad-level questions and a 13\% improvement in pass@16 on AIME 25, relative to the next best performing baseline. On aggregate, it achieves the highest macro-average (51.03 pass@1, 70.15 pass@16) and micro-average (70.66 pass@1, 83.29 pass@16) scores, highlighting robust gains across both balanced and volume-weighted evaluations.

These results demonstrate that Guide-GRPO is effective at integrating prompt-specific guidance into the training process, enabling the resulting policy to generalize better to difficult mathematical reasoning task -- even without access to guidance at test time. Additionally, its strong pass@k performance, combined with our observation that RLVR primarily drives progress through self-distillation, suggests that Guide-GRPO promotes better exploration and solution diversity, which are key to continued improvement in reasoning-centric domains.

\paragraph{Guide-GRPO improvements scale with context length and model size}
The results in Table \ref{tab:main_results_large} demonstrates Guide-GRPO's consistent improvements over vanilla GRPO when scaling to larger context lengths (4K $\xrightarrow{}$ 8K) and model sizes (7B $\xrightarrow{}$ 32B). For the 32B model, Guide-GRPO achieves 3.39 percentage point improvement in macro-average Pass@1 (56.26\% vs 52.87\%) and 1.89 percentage point improvement in micro-average Pass@1 (76.36\% vs 74.47\%). More generally, the improvements are consistent across both Pass@1 and Pass@16 metrics, with Guide-GRPO showing gains ranging from 1-4 percentage points across all configurations. These results strengthen the empirical evidence that Guide-GRPO's test-time generalization scales effectively with increased computational resources along both context length and parameter count dimensions.

\begin{table}[]
    \centering
    \caption{Comparison of Pass@1 (greedy decoding) and Pass@16 (temperature 1.0) performance on several math benchmarks with larger context length (8K) across model sizes (7B and 32B). The performance for Pass@1 is averaged over 5 independent samples. Table \ref{app:main_results_full_large} contains the full results with 95\% confidence intervals. Bold values indicate best performance.}
    \scalebox{0.9}{
    \begin{tabular}{llcccc}
    \toprule
    \textbf{Benchmark} & \textbf{Metric} & \textbf{Guide-32B-8K} & \textbf{GRPO-32B-8K} & \textbf{Guide-7B-8K} & \textbf{GRPO-7B-8K}  \\
    \midrule
    \multirow{2}{*}{MATH500}     & P@1  & \textbf{85.08} & 84.00 & 83.08  & 79.96 \\
                                 & P@16 & 94.80          & \textbf{95.20} & 94.60 & 94.40 \\ \addlinespace[3pt]
    \multirow{2}{*}{GSM8K}       & P@1  & \textbf{95.60} & 95.60          & 92.25 & 91.46 \\
                                 & P@16 & \textbf{98.03} & 97.88          & 97.80 & 98.18 \\ \addlinespace[3pt]
    \multirow{2}{*}{MINERVA}     & P@1  & \textbf{35.59} & 35.51          & 29.41 & 30.15 \\
                                 & P@16 & \textbf{50.00} & 49.63          & 49.26 & 48.53 \\ \addlinespace[3pt]
    \multirow{2}{*}{OLYMPIAD}    & P@1  & \textbf{54.52} & 48.06          & 43.67 & 42.37 \\
                                 & P@16 & \textbf{71.11} & 68.89          & 66.37 & 65.33 \\ \addlinespace[3pt]
    \multirow{2}{*}{AMC}         & P@1  & \textbf{65.06} & 63.61          & 62.65 & 53.73 \\
                                 & P@16 & \textbf{87.95} & \textbf{87.95}   & 90.36 & 89.16 \\ \addlinespace[3pt]
    \multirow{2}{*}{AIME 24}     & P@1  & \textbf{32.67} & 20.00          & 16.67 & 26.67 \\
                                 & P@16 & \textbf{66.67} & 60.00          & 63.33 & 53.33 \\ \addlinespace[3pt]
    \multirow{2}{*}{AIME 25}     & P@1  & \textbf{25.33} & 23.33          & 19.33 & 20.67 \\
                                 & P@16 & \textbf{50.00} & 43.33          & 43.33 & 30.00 \\ \midrule
    \multirow{2}{*}{Macro Avg.}  & P@1  & \textbf{56.26} & 52.87          & 49.58 & 49.29 \\ \addlinespace[3pt]
                                 & P@16 & \textbf{74.08} & 71.84          & 72.15 & 68.42 \\ \addlinespace[3pt]
    \multirow{2}{*}{Micro Avg.}  & P@1  & \textbf{76.36} & 74.47          & 71.15 & 69.89 \\
                                 & P@16 & \textbf{85.63} & 84.94          & 84.29 & 83.84 \\
    \bottomrule
    \end{tabular}
    }
    \label{tab:main_results_large}
\end{table}

\paragraph{Guide-GRPO demonstrates better train-time metrics} Figure \ref{fig:train-metrics} reveals an interesting training trajectory for Guide-GRPO. While initially exhibiting lower rollout accuracy without guidance, Guide-GRPO ultimately surpasses standard GRPO methods as training progresses. This performance crossover indicates our selective guidance injection approach effectively updates policy weights, enabling the model to perform better independently without requiring guidance at inference time. Notably, Guide-GRPO maintains consistently higher entropy throughout training while steadily increasing response length. This combination of enhanced entropy and improved performance, both during training and testing, suggests that Guide-GRPO preserves exploratory capacity for novel solutions while achieving superior results across diverse mathematical reasoning tasks.

\paragraph{Training dynamics reveal critical convergence factors for Guide-GRPO} Our investigation into various policy loss formulations uncovered specific configurations that lead to consistent training instability. Figure \ref{fig:training-dynamics} in Appendix \S \ref{appsec:training_dynamics} illustrates the reward trajectories across different settings, highlighting two critical factors affecting convergence:

\begin{itemize}[leftmargin=2em]    
    
    \item \textbf{Importance weighting relative to guided distribution} -- Constructing importance weights for guided trajectories relative to old policy weights conditioned solely on the prompt introduces significant training instability. Since the sampled trajectories originate from the old policy conditioned on both prompt and guidance—rather than just the prompt—the resulting probability ratios between current and old policy weights misrepresent the true gradient direction along the sampled trajectory, leading to suboptimal updates. A theoretical support is detailed in Appendix \S \ref{app:guide-theory}.
    
    \item \textbf{PPO-Clip mechanism destabilizes guided trajectories} -- When incorporating guided trajectories with importance weighting relative to the sampling distribution, we observe that PPO-clipping causes training divergence at approximately 50 steps. This phenomenon aligns with theoretical expectations: guided trajectories inherently generate smaller probability ratios, causing the minimum clip operation to artificially inflate most token probability ratios, thereby triggering unstable gradient updates. We mitigated this issue by removing ratio clipping, which empirically produced stable training outcomes. 

\end{itemize}

\paragraph{Threshold for guidance} Our ablation across three guidance thresholds (All Incorrect, Mostly Incorrect, and Always) reveals optimal performance when guidance is applied only when all standard rollouts fail, as shown in Table \ref{tab:guidance_ablation}. While "Mostly Incorrect" performs comparably, unconditional guidance significantly impairs results. Excessive guidance handicaps learning by preventing the model from developing robust reasoning. Conversely, strategic guidance only for entirely incorrect samples provides essential signal when the model's sampling distribution completely misses valid solutions, providing exposure to guided solution traces to problems beyond the current policy's capability while still incentivizing independent exploration in all other cases.

\begin{figure}
    \centering
    \includegraphics[width=0.9\linewidth]{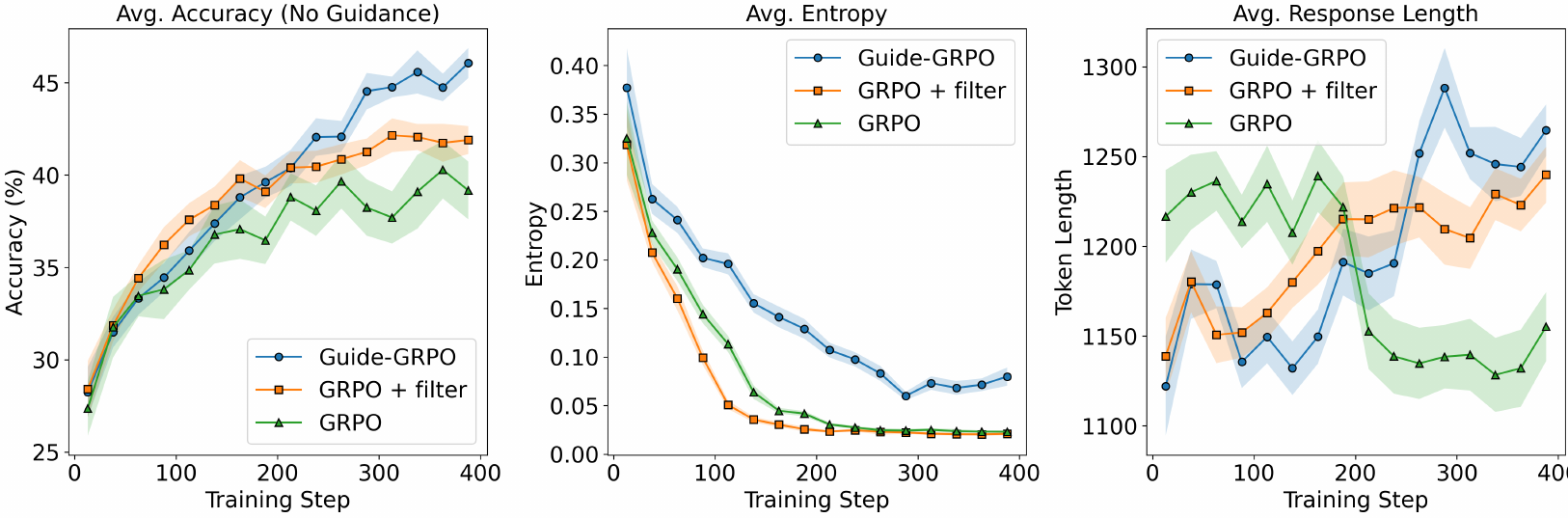}
    \caption{Comparison of Guide-GRPO with baseline methods across training steps (400 total). Left: Rollout accuracy without guidance shows Guide-GRPO ultimately outperforming baselines despite lower initial performance. Middle: Generation entropy remains consistently higher for Guide-GRPO, indicating better solution diversity. Right: Response token length increases for Guide-GRPO in later training stages. Shaded regions represent confidence intervals.}
    \label{fig:train-metrics}
    \vspace{-2mm}
\end{figure}

\begin{table}[htbp]
    \centering
    \small
    \setlength{\tabcolsep}{10pt}
    \renewcommand{\arraystretch}{1.15}
\caption{Performance comparison of various guidance threshold strategies. The "All Incorrect" strategy applies guidance only when all original prompt rollouts fail; "Mostly Incorrect" applies guidance when accuracy falls below 25\%; and "Always" unconditionally applies guidance to all rollouts. Bold values indicate best performance.}
    \scalebox{1.1}{
    \begin{tabular}{llccc}
        \toprule
         & \textbf{Metric} & \textbf{All Incorrect} & \textbf{Mostly Incorrect} & \textbf{Always} \\
         \midrule
         \multirow{2}{*}{Macro Avg.}  & P@1  & \textbf{51.03} & 50.48 & 40.97   \\
                                      & P@16 & \textbf{70.15} & 69.86 & 60.03   \\ \addlinespace[3pt]
         \multirow{2}{*}{Micro Avg.}  & P@1  & \textbf{70.66} & 70.14 & 63.18   \\
                                      & P@16 & \textbf{83.29} & \textbf{83.29} & 80.72   \\
        \bottomrule
    \end{tabular}
    }
    \label{tab:guidance_ablation}
\end{table}

\section{Related Work}
\paragraph{Reinforcement Learning for LLM Reasoning}
Recent advances in reinforcement learning approaches \cite{ziegler2019finetuning, ouyang2022training, abdulhai2023lmrl, pmlr-v235-zhou24t, sun2024aligning} have demonstrated remarkable progress in enhancing LLMs’ reasoning capabilities. OpenAI-o1 \cite{openai2024o1} and DeepSeek-R1 \cite{guo2025deepseekr1} have generated state-of-the-art results in complex reasoning tasks such as in math, coding, etc. by pioneering the use of Reinforcement Learning from Verifiable Rewards (RLVR) \cite{lambert2024tulu3, du2025kimi, guo2025deepseekr1, openai2024o1}, in which the reward is computed using a rule-based verification function \cite{cobbe2021training, austin2021program, gehring2024rlef}. Previous works have shown that models trained with RLVR have surpassed those trained with previous approaches (such as supervised finetuning (SFT) or reinforcement learning from human feedback (RLHF)) in terms of generalization capacity and performance \cite{chu2025sftmemorizesrlgeneralizes, snell2025scaling}. Some works also provide frameworks for the distillation of knowledge from pass$@k$ into pass@1 via expert iteration \cite{zelikman2022star, gulcehre2023rest, openai2024rft, singh2024beyond, hosseini2024vstar, zhang2024restmcts} to improve a language model’s ability to solve challenging reasoning problems autonomously.

\paragraph{Learning Mechanisms for Reinforcement from Verifiable Rewards}
Building upon the increasing traction of RLVR in the reasoning space, some works have addressed the fundamental dynamics of improvements seen from RLVR \cite{yue2025does, zhao2025echo, dang2025assessing}, claiming that RLVR boosts sampling efficiency by biasing the model’s output distribution toward paths that are more likely to yield rewards but reduces the overall reasoning capacity boundary at very high $k$ ($=256$) \cite{yue2025does}. We corroborate the results regarding sampling efficiency in our work as well, revealing that while capability gain exists in the lower $k$ range ($<=16$) across multiple domains (math, coding, STEM, etc.) and model scales, learning to solve new problems via RLVR is dominated by self-distillation of pass@k performance into pass@1 performance. As a result, we also approach methods to induce capability gain and thus propose the technique of Guide to adaptively incorporate hints on failure to surpass gains made just by self-distillation.

\paragraph{Reinforcement Learning for LLMs with Off-Policy Data}
A variety of off-policy reinforcement learning techniques such as DPO \cite{rafailov2023direct} and variants of on-policy algorithms like Tapered Off-Policy REINFORCE \cite{leroux2025tapered} have been applied to LLMs recently. Off-policy methods yield the advantage of enabling better sample efficiency by learning from experiences collected by different policies, but at the cost of potential increased instability. One recent work \cite{yan2025luffy} targets integrating high-quality off-policy trajectories with policy shaping via regularized importance sampling. In contrast, our method leverages guidance in context, which we hypothesize has the potential to bridge the gap in benefits between on-policy and off-policy learning than fully off-policy incorporation and can be applied to training settings in which no more power teacher model exists to distill from. Thus we focused on studying how to improve model performance independent of directly distilling from a much stronger model such as R1.

\section{Future Work}
\label{sec:limiations}
While Guide-GRPO demonstrates strong empirical and theoretical performance on mathematical reasoning, several directions remain open. First, future work should more deeply investigate the effect of the quality and nature of guidance on model progress during RL. Future methods may dynamically generate guidance targeted at specific reasoning failures in the policy’s trajectories within a multi-agent RL setting.
Extending Guide to other domains such as code generation, agents, or even robotics could test its generality.  In these initial experiments we have only evaluated Guide on models at 32B-parameter scale and at context lengths up to 8k due to compute limitations. Scaling studies are needed to understand how the effectiveness of \text{Guide} varies with model size, context length, and compute scale.

\section{Conclusion}
We introduced \text{Guide}, a general framework for incorporating selective, in-context guidance into reinforcement learning of reasoning models. By augmenting rollouts only when all unguided attempts fail and applying principled off-policy correction, Guide-GRPO improves sample efficiency, pass@k, and generalization—even without guidance at test time. Theoretically and empirically, we show that guidance-augmented rollouts can accelerate capability gain and unlock more advanced reasoning skills beyond what traditional GRPO alone achieves.

\begin{ack}
The authors thank Karl J. Obermeyer for his detailed feedback on the theorem and Summer Yue for critical feedback on the research scope.
\end{ack}

\bibliographystyle{unsrt}
\bibliography{references}

\medskip

{
\small

\appendix
\newpage 
\section{Guide Algorithms}
\label{app:Guide}

The Guide algorithm is a general method for adaptively incorporating guidance into online RL. The general form is given as:

\begin{algorithm}[H]
\caption{Guide}
\begin{algorithmic}[1]
\For{iteration=1,2,\dots}
    \For{step=1,2,\dots,N}
        \State Run policy $\pi_{\theta_{\text{old}}}$ in environment for $T$ timesteps
    \EndFor

    \State \textcolor{blue}{Identify unsolved set $U=\{q : \text{all rollouts fail}\}$}
    \State \textcolor{blue}{For $q\in U$, sample guided rollouts $\tilde{o}\sim\pi_{\theta_{\text{old}}}(\cdot|\langle q,\texttt{guid}\rangle)$}
    \State \textcolor{blue}{Compute unguided $\hat{A}_t$ and guided advantages $\hat{A}_{\tilde{o},t}$ }

    \State Optimize objective \textcolor{blue}{$\mathcal{J}$ in eq. \ref{eq:J_guide}} wrt $\theta$

    \State $\theta_{\text{old}} \leftarrow \theta$
\EndFor
\end{algorithmic}
\end{algorithm}

Guide can also be specialized to PPO, integrating selective hinting on failed prompts and importance-sampling corrections directly into PPO's training loop. We call this specialization \textbf{Guide-PPO}.

\begin{algorithm}[H]
\caption{Guide-PPO}
\label{alg:guide-ppo}
\small
\textbf{Input:} initial policy $\pi_{\theta_\text{init}}$; reward model $r_\varphi$; task prompts $\mathcal D$; hyperparameters $\beta$, $\gamma$.
\begin{algorithmic}[1]
\For{iteration = 1,2,\dots}
    \State $\pi_{\theta_\text{old}} \gets \pi_\theta$, \quad $\pi_{\text{ref}} \gets \pi_\theta$
    \State Sample minibatch $\mathcal{D}_b \subset \mathcal{D}$
    \State Sample $k$ rollouts $\{o_i\}_{i=1}^{k} \sim \pi_{\theta_\text{old}}(\cdot|q)$ for all $q \in \mathcal{D}_b$
    \State Compute rewards $r_i = r_\varphi(o_i)$ and estimate advantages $\hat{A}_{i,t}$ via GAE

    \State \textcolor{blue}{Identify unsolved set $U=\{q\in\mathcal D_b: \text{\emph{all} }k\text{ rollouts fail}\}$}
    \For{$q \in U$}
        \State \textcolor{blue}{Sample $k$ guidance rollouts $\tilde{o} \sim \pi_{\theta_\text{old}}(\cdot|\langle q, \texttt{guid}\rangle)$}
        \State \textcolor{blue}{Compute guided rewards $r_{\tilde{o}}=r_\varphi(\tilde{o})$}
        \State \textcolor{blue}{Recompute advantages $\hat{A}_{\tilde{o},t}$ combining original and guided rollouts}
    \EndFor

    \State Optimize objective \textcolor{blue}{$\mathcal{J}$ in eq. \ref{eq:J_guide}} wrt $\theta$
\EndFor
\end{algorithmic}
\end{algorithm}

Where:
\begin{itemize}
    \item $\pi_\theta$: policy being trained; $\pi_{\theta_\text{old}}$: sampling policy from previous iteration.
    \item $\pi_{\text{ref}}$: fixed KL-penalty reference policy.
    \item $\mathcal{D}_b$: sampled minibatch of prompts.
    \item $U$: prompts where all original rollouts fail, triggering guided rollouts.
    \item $\tilde{o}$: rollouts sampled with additional guidance suffix $\texttt{guid}$.
    \item $\hat{A}_{i,t}$: advantages estimated via Generalized Advantage Estimation (GAE).
    \item $f(\cdot)$: shaping function correcting importance sampling.
    \item $\beta$: KL divergence regularization coefficient.
\end{itemize}
\newpage

\section{Learning Efficiency of Guidance-Augmented GRPO}
\label{app:guide-theory}

\subsection{Preliminaries}
Let $\pi_\theta(y \mid x)$ be an autoregressive language model that samples a trajectory of token outputs $y = (y_1, \dots, y_T)$. The success probability $p_q$ is defined as the marginal probability of sampling a trajectory that leads to a correct final answer. Gradients and advantages refer to whole-sequence log-probabilities and returns unless otherwise noted.

For each prompt $q$ with plain input $x_q$ and guided input $\tilde x_q$, define  

\begin{equation}
\label{eq:pq_defs}
p_q(\theta) = \mathbb{P}_{y \sim \pi_\theta(\cdot \mid x_q)} \left[f(y) = y_q^* \right], \qquad
\tilde p_q(\theta) = \mathbb{P}_{y \sim \pi_\theta(\cdot \mid \tilde x_q)} \left[f(y) = y_q^* \right]
\end{equation}

the success probabilities without and with guidance, respectively. For compactness, we overload notation and write
\begin{equation}
p_q := p_q(\theta_t),
\end{equation}
to denote the scalar success probability at time step \(t\), while retaining \(p_q(\cdot)\) to indicate its dependence on \(\theta\) elsewhere in the analysis.

A vanilla GRPO update with step size $\eta>0$ is
\begin{equation}
\label{eq:vanilla_grpo}
\theta_{t+1} = \theta_t + \eta \sum_{q\in U} A_q\, \nabla_\theta \log \pi_\theta\bigl(y_q \mid x_q\bigr),
\end{equation}
where $A_q$ is a group-normalised scalar advantage, computed over both guided and unguided rollouts.  

When \emph{all} $k$ plain rollouts for $q$ fail, \textbf{Guide-GRPO} draws
$k$ guided rollouts on $\tilde x_q$ and applies an importance weight  
\begin{equation}
\label{eq:importance_weight}
w_q = \frac{\pi_\theta\bigl(y_q \mid x_q\bigr)}{\pi_{\theta_{\text{old}}}\bigl(y_q \mid \tilde x_q\bigr)} > 0
\end{equation}
to make the guided gradient on-policy.

Let $U$ be the set of problems unsolved by the initial policy.  

We assume that for each question $q \in U$, the guided outputs have positive expected advantage 
relative to the full group of both guided and unguided rollouts \( G_q \), i.e.,
\begin{equation}
\mathbb{E}_{q \in U} \left[\, \mathbb{E}_{y \sim \pi_\theta(\cdot \mid \tilde{x}_q)} \left[\, \tilde{A}_q(y;\, G_q) \,\right] \,\right] > 0.
\end{equation}
This allows for the possibility that guidance may not always help, but is beneficial on average. For brevity, we define:
\begin{equation}
\mathbb{E}[\tilde A_q] := \mathbb{E}_{y \sim \pi_\theta(\cdot \mid \tilde{x}_q)} \left[\, \tilde{A}_q(y;\, G_q) \,\right],
\end{equation}

\subsection{Lemmas}

\begin{lemma}[Importance sampling aligns gradients]
\label{lem:alignment}
For any guided sample $y_q\sim\pi_{\theta_{\text{old}}}(\,\cdot\,\mid\tilde x_q)$,
\begin{equation}
\label{eq:alignment_eqn1}
\nabla_\theta \log \pi_\theta(y_q \mid \tilde x_q) = w_q\, \nabla_\theta \log \pi_\theta(y_q \mid x_q)
\end{equation}
so the guided gradient is a \emph{positive scalar multiple} of the plain gradient.  
Consequently,
\begin{equation}
\label{eq:alignment_cosine}
\cos\!\Bigl(
\nabla_\theta \log \pi_\theta(y_q \mid x_q),\,
w_q\nabla_\theta \log \pi_\theta(y_q \mid x_q)
\Bigr)
=1.
\end{equation}
\end{lemma}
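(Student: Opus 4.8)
The plan is to prove Eq.~\ref{eq:alignment_eqn1} by a one-line application of the log-derivative (score-function) identity, and then read off Eq.~\ref{eq:alignment_cosine} as an immediate corollary. The starting observation is that a guided rollout $y_q$ enters the Guide objective~\ref{eq:J_guide} only through the importance ratio $w_q = \pi_\theta(y_q \mid x_q)/\pi_{\theta_{\text{old}}}(y_q \mid \tilde x_q)$ of Eq.~\ref{eq:importance_weight}. Since the sample $y_q$ was drawn from $\pi_{\theta_{\text{old}}}(\cdot \mid \tilde x_q)$ it has positive mass there, and $\pi_\theta(y_q \mid x_q) > 0$ by the full-support property of autoregressive softmax policies, so $w_q$ is a well-defined strictly positive scalar; moreover the denominator $\pi_{\theta_{\text{old}}}(y_q \mid \tilde x_q)$ is a constant in the optimization variable $\theta$. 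Hence the guided gradient --- the direction in which Guide updates $\theta$ in response to a guided rollout --- is $\nabla_\theta w_q$, and I would compute
\[
\nabla_\theta w_q
\;=\; \frac{\nabla_\theta \pi_\theta(y_q \mid x_q)}{\pi_{\theta_{\text{old}}}(y_q \mid \tilde x_q)}
\;=\; \frac{\pi_\theta(y_q \mid x_q)}{\pi_{\theta_{\text{old}}}(y_q \mid \tilde x_q)}\cdot\frac{\nabla_\theta \pi_\theta(y_q \mid x_q)}{\pi_\theta(y_q \mid x_q)}
\;=\; w_q\,\nabla_\theta \log \pi_\theta(y_q \mid x_q),
\]
where the middle equality multiplies and divides by $\pi_\theta(y_q \mid x_q) > 0$ and the last uses $\nabla_\theta \pi_\theta = \pi_\theta\,\nabla_\theta \log \pi_\theta$. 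This is the identity asserted in Eq.~\ref{eq:alignment_eqn1}, and it exhibits the guided gradient as a \emph{positive} scalar multiple of the plain-prompt score $\nabla_\theta \log \pi_\theta(y_q \mid x_q)$.

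For Eq.~\ref{eq:alignment_cosine} I would write the guided gradient as $c\,v$ with $v := \nabla_\theta \log \pi_\theta(y_q \mid x_q)$ and $c := w_q > 0$, so that, for $v \neq 0$,
\[
\cos(v,\,c v) \;=\; \frac{\langle v,\,c v\rangle}{\|v\|\,\|c v\|} \;=\; \frac{c\,\|v\|^2}{c\,\|v\|^2} \;=\; 1,
\]
the positivity of $c$ being precisely what lets it leave the norm, $\|c v\| = c\,\|v\|$, with no sign change. The only edge case is $v = 0$, in which the cosine is conventionally undefined but the conclusion that the guided gradient is a positive scalar multiple of the plain gradient still holds trivially, and nothing downstream depends on its direction.

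I do not anticipate a substantive obstacle: the core is the standard REINFORCE manipulation together with a trivial norm identity. The one point I would state carefully rather than gloss is the interpretive step --- that Guide's off-policy correction replaces the naive guided score $\nabla_\theta \log \pi_\theta(y_q \mid \tilde x_q)$, which would push the policy toward solving the prompt \emph{with} the hint in context, by $\nabla_\theta w_q$, which by the display above is a strictly positive rescaling of the plain-prompt score --- and that it is exactly the $\theta$-independence of the sampling denominator that makes this an exact rescaling with no spurious additive term. This is also what bridges the lemma to Theorem~\ref{thm:guide_better}: since each guided update points along the plain-prompt gradient with positive coefficient $w_q$, guided rollouts raise $p_q$ (performance without guidance) rather than merely $\tilde p_q$, which is the mechanism behind the additional $(1-p_q)^k\,\mathbb{E}[\tilde A_q]\,p_q$ term in $\mathbb{E}[\Delta\mathcal{R}_{\text{Guide}}]$.
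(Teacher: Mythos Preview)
Your argument is correct. The paper does not supply an explicit proof of this lemma --- it is simply stated in Appendix~\ref{app:guide-theory} --- so there is no proof to compare against directly. Your interpretation of the ``guided gradient'' as $\nabla_\theta w_q$ rather than the literal $\nabla_\theta \log \pi_\theta(y_q\mid\tilde x_q)$ is the only reading under which Eq.~\ref{eq:alignment_eqn1} is true (the two score functions are computed through different forward passes of the network and are not parallel in general), and it is also the reading consistent with how the lemma is used in the proof of Theorem~\ref{thm:guide_better}, where the guided contribution to $g$ in Eqs.~\eqref{eq:g_update1}--\eqref{eq:g_update2} appears along $\nabla_\theta\log\pi_\theta(y_q\mid x_q)$. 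You flag this interpretive step explicitly, which is the right call. The computation itself is a clean application of the log-derivative identity, the positivity of $w_q$ is correctly justified via full softmax support, and the cosine claim then follows tautologically from $w_q>0$.
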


\begin{lemma}[Selective guidance outperforms or matches always-guidance]
\label{lem:selective}
Let $\tilde A_q^{\text{fail}}$ and $\tilde A_q^{\text{succ}}$ denote the expected guided advantages conditioned on whether all $k$ plain rollouts fail or at least one succeeds, respectively:
\begin{align}
\tilde A_q^{\text{fail}} &:= \mathbb{E}_{y \sim \pi_\theta(\cdot \mid \tilde x_q)} \left[ \tilde A_q(y; G_q) \,\big|\, \text{all $k$ plain rollouts fail} \right], \\
\tilde A_q^{\text{succ}} &:= \mathbb{E}_{y \sim \pi_\theta(\cdot \mid \tilde x_q)} \left[ \tilde A_q(y; G_q) \,\big|\, \text{at least one plain success} \right].
\end{align}
Then the expected first-order improvement in the number of unguided solutions is greater for \emph{selective} guidance than for \emph{always} guidance:
\begin{equation}
\label{eq:delta_diff_cond}
\Delta^{\text{sel}} \ge \Delta^{\text{all}}.
\end{equation}
Moreover, since $\tilde A_q(y; G_q)$ is computed relative to the full set of guided and unguided rollouts, and the group mean reward is lower when all plain rollouts fail, we have $\tilde A_q^{\text{fail}} \ge \tilde A_q^{\text{succ}}$ by construction.
\end{lemma}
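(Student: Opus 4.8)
The plan is to compare the two one‑step first‑order improvements by isolating the contribution of the guided rollouts. Both schemes draw the same $k$ unguided rollouts for every prompt, so their improvements share a common "vanilla" first‑order term (the $\eta\sum_{q}A_q p_q^2$ of Theorem~\ref{thm:guide_better}); the schemes differ only in which guided rollouts enter the group $G_q$ and with what advantage. Using Lemma~\ref{lem:alignment}, the importance‑weighted guided gradient is the positive scalar multiple $w_q>0$ of the plain gradient $\nabla_\theta\log\pi_\theta(y_q\mid x_q)$, so it moves $p_q$ in exactly the direction an unguided REINFORCE step on $y_q$ with advantage $\tilde A_q(y_q)$ would. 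Running the same Taylor expansion in $\eta$ that yields Theorem~\ref{thm:guide_better}, the guided rollouts therefore contribute, per prompt $q\in U$, a term of the form $c_q\,\mathbb{E}[\tilde A_q\mid\text{rollout configuration}]$ with $c_q=p_q\cdot(\text{positive constant})\ge 0$.

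The key step is to condition on the event $F_q=\{\text{all }k\text{ plain rollouts of }q\text{ fail}\}$, which has probability $(1-p_q)^k$. On $F_q$ the selective and the always scheme draw identical guided rollouts against the identical group $G_q$, so their guided contributions coincide and equal $(1-p_q)^k\,c_q\,\tilde A_q^{\text{fail}}$. On the complement $F_q^{\,c}$, selective draws no guided rollouts (contribution $0$), while always draws guided rollouts whose advantage is measured against a group that now also contains at least one successful plain rollout, namely $\tilde A_q^{\text{succ}}$ (contribution $\bigl(1-(1-p_q)^k\bigr)\,c_q\,\tilde A_q^{\text{succ}}$). Summing over $q\in U$, the vanilla terms cancel and
\[
\Delta^{\text{sel}}-\Delta^{\text{all}}=-\,\eta\sum_{q\in U}\bigl(1-(1-p_q)^k\bigr)\,c_q\,\tilde A_q^{\text{succ}}+\mathcal{O}(\eta^2),
\]
so it remains to show that $\sum_{q\in U}\bigl(1-(1-p_q)^k\bigr)\,c_q\,\tilde A_q^{\text{succ}}\le 0$, for which it suffices that $\tilde A_q^{\text{succ}}\le 0$ on this weighted average over $U$.

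To close the argument, observe that on $F_q^{\,c}$ the group $G_q$ already contains an on‑policy reward‑$1$ rollout, so its mean reward strictly exceeds the value it takes on $F_q$; since $\hat A_{r,t}$ is a group‑mean‑subtracted reward and $z$‑normalization forces the group‑mean advantage to vanish, the guided rollouts (mean reward $\bar r^{\text{guided}}_q$, half the group) carry advantage $\tilde A_q^{\text{succ}}=\tfrac12\bigl(\bar r^{\text{guided}}_q-\bar r^{\text{plain}}_q\bigr)$ — the quantitative content of the stated inequality $\tilde A_q^{\text{fail}}\ge\tilde A_q^{\text{succ}}$: passing from the all‑fail regime to the some‑success regime raises the baseline by the plain success mass and can only decrease the guided advantage. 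Whenever a prompt the unguided policy already solves with positive probability is not solved even more frequently under the hint, i.e.\ $\bar r^{\text{guided}}_q\le\bar r^{\text{plain}}_q$ on $F_q^{\,c}$, we obtain $\tilde A_q^{\text{succ}}\le 0$, hence $\Delta^{\text{sel}}\ge\Delta^{\text{all}}$ to first order in $\eta$, as claimed. I expect this last sign argument to be the main obstacle: one must argue that on the some‑success event the on‑policy correct rollouts already saturate the available first‑order gain so that the extra off‑policy guided rollouts cannot improve it — which needs either the modeling assumption $\bar r^{\text{guided}}_q\le\bar r^{\text{plain}}_q$ on $F_q^{\,c}$ (plausible, since a hint adds little marginal success where the model already succeeds) or a more careful second‑order accounting of how injecting guided rollouts dilutes, through the group normalization, the advantage of the very on‑policy successes that were driving $p_q$ upward.
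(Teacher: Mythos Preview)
Your decomposition and subtraction mirror the paper's exactly: both condition on the all-fail event $F_q$, write $\Delta^{\text{sel}}$ and $\Delta^{\text{all}}$ as first-order Taylor expansions in $\eta$ with a shared vanilla term $A_q p_q^2$, and subtract. The paper's proof is literally those three lines plus a one-line sign check that invokes the inequality $\tilde A_q^{\text{fail}} \ge \tilde A_q^{\text{succ}}$ directly. That inequality is part of the lemma statement itself, asserted ``by construction'' because the $z$-normalized advantage is computed against a group mean that is strictly lower when all plain rollouts fail. There is no separate sign argument in the paper; the comparison of conditional guided advantages is treated as a structural fact of group normalization already granted in the statement.

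Your argument, by contrast, never uses the granted inequality. After subtracting you aim instead to establish the strictly stronger claim $\tilde A_q^{\text{succ}} \le 0$, which you correctly flag as the main obstacle and for which you propose an additional modeling hypothesis ($\bar r^{\text{guided}}_q \le \bar r^{\text{plain}}_q$ on $F_q^{\,c}$) or a second-order dilution argument. Neither is part of the lemma, and neither appears in the paper's proof. The gap in your proposal is therefore self-inflicted: you are attempting to prove more than the lemma asks, and that is why you cannot close the argument. Once the advantage comparison $\tilde A_q^{\text{fail}} \ge \tilde A_q^{\text{succ}}$ is taken as given (as the lemma's ``moreover'' clause instructs), the sign conclusion is meant to be immediate and your long final paragraph becomes unnecessary.
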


\begin{proof}
Selective guidance applies guided updates only when all $k$ plain rollouts fail, so:
\begin{align}
\Delta^{\text{sel}} &= \eta \sum_{q \in U} \left[ A_q p_q^2 + (1 - p_q)^k \tilde A_q^{\text{fail}} p_q \right], \\
\Delta^{\text{all}} &= \eta \sum_{q \in U} \left[ A_q p_q^2 + (1 - p_q)^k \tilde A_q^{\text{fail}} p_q + \left(1 - (1 - p_q)^k\right) \tilde A_q^{\text{succ}} p_q \right].
\end{align}
Subtracting,
\begin{equation}
\Delta^{\text{sel}} - \Delta^{\text{all}} = - \eta \sum_{q \in U} \left[1 - (1 - p_q)^k \right] \left( \tilde A_q^{\text{fail}} - \tilde A_q^{\text{succ}} \right) p_q.
\end{equation}
Each term in the sum is $\le 0$ since the bracketed term is positive and $\tilde A_q^{\text{fail}} \ge \tilde A_q^{\text{succ}}$. Therefore, $\Delta^{\text{sel}} \ge \Delta^{\text{all}}$.
\end{proof}

\subsection{Main Theorem}


\paragraph{Guide-GRPO improves learning efficiency}
Let \(U\) be the set of prompts \(q\) unsolved by the current policy \(\pi_\theta\). Suppose that, in expectation over unsolved prompts and the group $G_q$ of guided and unguided trajectories, the guided advantage is positive:
\[
\mathbb{E}_{q \in U} \left[\, \mathbb{E}_{y \sim \pi_\theta(\cdot \mid \tilde{x}_q)} \left[\, \tilde{A}_q(y;\, G_q) \,\right] \,\right] > 0.
\]
Then for all \(\eta\) sufficiently small, the one-step expected improvement, \(\Delta \mathcal{R}\), under Guide-GRPO exceeds that of Vanilla GRPO, to first order in \(\eta\):
\vspace{0.5em}
\begin{equation}
\mathbb{E}[\Delta\mathcal{R}_{\text{\emph{Guide}}}] > \mathbb{E}[\Delta\mathcal{R}_{\text{\emph{Vanilla}}}],
\end{equation}
\vspace{0.25em}
where
\begin{align}
\mathbb{E}[\Delta\mathcal{R}_{\text{\emph{Vanilla}}}] &= \eta \sum_{q \in U} A_q\, p_q^2 + \mathcal{O}(\eta^2), \\
\mathbb{E}[\Delta\mathcal{R}_{\text{\emph{Guide}}}] &= \eta \sum_{q \in U} \left[ A_q\, p_q^2 + (1 - p_q)^k\, \mathbb{E}_{y \sim \pi_\theta(\cdot \mid \tilde x_q)}[\tilde A_q(y)]\, p_q \right] + \mathcal{O}(\eta^2),
\end{align}
and \(p_q = \mathbb{P}_{y \sim \pi_\theta(\cdot \mid x_q)}\left[f(y) = y_q^*\right]\) denotes the success probability under the unguided policy.

\begin{proof}
Let $\theta_{t+1} = \theta_t + \eta\, g(\theta_t)$ for a small step size $\eta > 0$.  
We perform a first-order Taylor expansion:
\begin{align}
\label{eq:taylor_1} p_q(\theta_{t+1}) &= p_q(\theta_t + \eta\, g) \\
\label{eq:taylor_2} 
p_q(\theta_t + \eta\, g) &= p_q(\theta_t) + \eta\, \left\langle \nabla_\theta p_q(\theta_t),\, g \right\rangle + \mathcal{O}(\eta^2) \\
\label{eq:taylor_3} 
&= p_q + \eta\, \left\langle \nabla_\theta p_q,\, g \right\rangle + \mathcal{O}(\eta^2).
\end{align}

Using the log derivative trick:
\begin{equation}
\label{eq:log_derivative}
\nabla_\theta p_q = \mathbb{E}_{y \sim \pi_\theta(\cdot \mid x_q)} \left[ \mathbb{I}[f(y) = y_q^*] \nabla_\theta \log \pi_\theta(y \mid x_q) \right]
\end{equation}

which for a single trajectory entails:
\begin{equation}
\label{eq:grad_expanded}
p_q(\theta_{t+1}) = p_q + \eta\, p_q\, \langle \nabla_\theta \log \pi_\theta(y_q \mid x_q), g \rangle + \mathcal{O}(\eta^2).
\end{equation}

Now substitute in the Guide-GRPO update:
\begin{align}
\label{eq:g_update1} g &= \sum_{q \in U} A_q \nabla_\theta \log \pi_\theta(y_q \mid x_q) + \sum_{q \in U} (1 - p_q)^k \tilde A_q \nabla_\theta \log \pi_\theta(y_q \mid x_q), \\
\label{eq:g_update2} &= \sum_{q \in U} \left[ A_q + (1 - p_q)^k \tilde A_q \right] \nabla_\theta \log \pi_\theta(y_q \mid x_q).
\end{align}

Then:
\begin{equation}
\label{eq:dot_product}
\langle \nabla_\theta \log \pi_\theta(y_q \mid x_q), g \rangle = \left[ A_q + (1 - p_q)^k \tilde A_q \right] \left\| \nabla_\theta \log \pi_\theta(y_q \mid x_q) \right\|^2.
\end{equation}

Therefore:
\begin{equation}
\label{eq:delta_pq}
p_q(\theta_{t+1}) - p_q = \eta\, p_q \left[ A_q + (1 - p_q)^k \tilde A_q \right] \left\| \nabla_\theta \log \pi_\theta(y_q \mid x_q) \right\|^2 + \mathcal{O}(\eta^2).
\end{equation}

Summing over $q$:
\begin{align}
\label{eq:deltaR_guide_1} \mathbb{E}[\Delta \mathcal{R}_{\text{Guide}}] &= \sum_{q \in U} \mathbb{E}[p_q(\theta_{t+1}) - p_q(\theta_t)] \\
\label{eq:deltaR_guide_2} &= \eta \sum_{q \in U} p_q \left[ A_q + (1 - p_q)^k \mathbb{E}[\tilde A_q] \right] \left\| \nabla_\theta \log \pi_\theta(y_q \mid x_q) \right\|^2 + \mathcal{O}(\eta^2).
\end{align}

Compare to vanilla:
\begin{equation}
\label{eq:deltaR_vanilla_expectation}
\mathbb{E}[\Delta \mathcal{R}_{\text{Vanilla}}] = \eta \sum_{q \in U} A_q\, p_q \left\| \nabla_\theta \log \pi_\theta(y_q \mid x_q) \right\|^2 + \mathcal{O}(\eta^2).
\end{equation}

As long as $\mathbb{E}_{q \in U} \left[\, \mathbb{E}_{y \sim \pi_\theta(\cdot \mid \tilde{x}_q)} \left[\, \tilde{A}_q(y;\, G_q) \,\right] \,\right] > 0$, the guide update yields a greater gain.
\end{proof}

\paragraph{Interpretation} Guide enables gradient updates on prompts where \emph{all} plain rollouts
fail, scaling its advantage by $(1-p_q)^k$.  
Its relative gain over vanilla GRPO is therefore proportional to
\begin{equation}
\label{eq:interpret_gain}
(1 - p_q)^k  \mathbb{E}_{y \sim \pi_\theta(\cdot \mid \tilde{x}_q)}\left[\tilde A_q(y;\, G_q)\right]  p_q,
\end{equation}
which increases when
\begin{itemize}
  \item \emph{failure probability} $(1-p_q)^k$ is large (hard prompts),
\item \emph{guided advantage} $\mathbb{E}_{y \sim \pi_\theta(\cdot \mid \tilde{x}_q)}[\tilde A_q(y; G_q)]$ is large on average relative to the full rollout group,
  \item the baseline success probability $p_q$ is non-zero (so credit
        can propagate).
\end{itemize}

\section{Effective vs Absolute Capability Gain}
\label{app:abs_cap_gain}
We can set $k$ equal to the number of rollouts per problem used during training, which we call \textit{effective} capability gain, or to the convergence of pass@$k$ curves where each additional sample provides a relative pass@$k$ improvement below some threshold $\epsilon$, which we define as \textit{absolute} 
 capability gain.

\noindent Let \(k_{\mathrm{eff}}\) be the number of rollouts per problem used in RLVR \textit{training}.  We define the {\bf effective capability gain} as
\begin{equation}
\mathcal{G}_{\mathrm{eff}}
\;=\;
\sum_{i\in U}
\mathbb{I}\Bigl[\,
\forall\,\hat y\in\mathcal Y_i^{(k_{\mathrm{train}})}:\,\hat y\neq y_i
\;\wedge\;\hat y_i^{\pi_{\mathrm{RL}}}=y_i
\Bigr].
\end{equation}

\noindent In contrast, we can define absolute capability gain. Let
\[
k_{\mathrm{abs}}
\;=\;
\min\Bigl\{\,k : \tfrac{\mathrm{pass}@k - \mathrm{pass}@{(k-1)}}{\mathrm{pass}@{(k-1)}} < \epsilon\Bigr\}
\]
be the smallest sample size at which additional rollouts yield $< \epsilon$ relative improvement.  Then the {\bf absolute capability gain} is
\begin{equation}
\mathcal{G}_{\mathrm{abs}}
\;=\;
\sum_{i\in U}
\mathbb{I}\Bigl[\,
\forall\,\hat y\in\mathcal Y_i^{(k_{\mathrm{abs}})}:\,\hat y\neq y_i
\;\wedge\;\hat y_i^{\pi_{\mathrm{RL}}}=y_i
\Bigr].
\end{equation}

\section{Training Dynamics}
\label{appsec:training_dynamics}

\begin{figure}[H]
    \centering
    \includegraphics[width=0.8\linewidth]{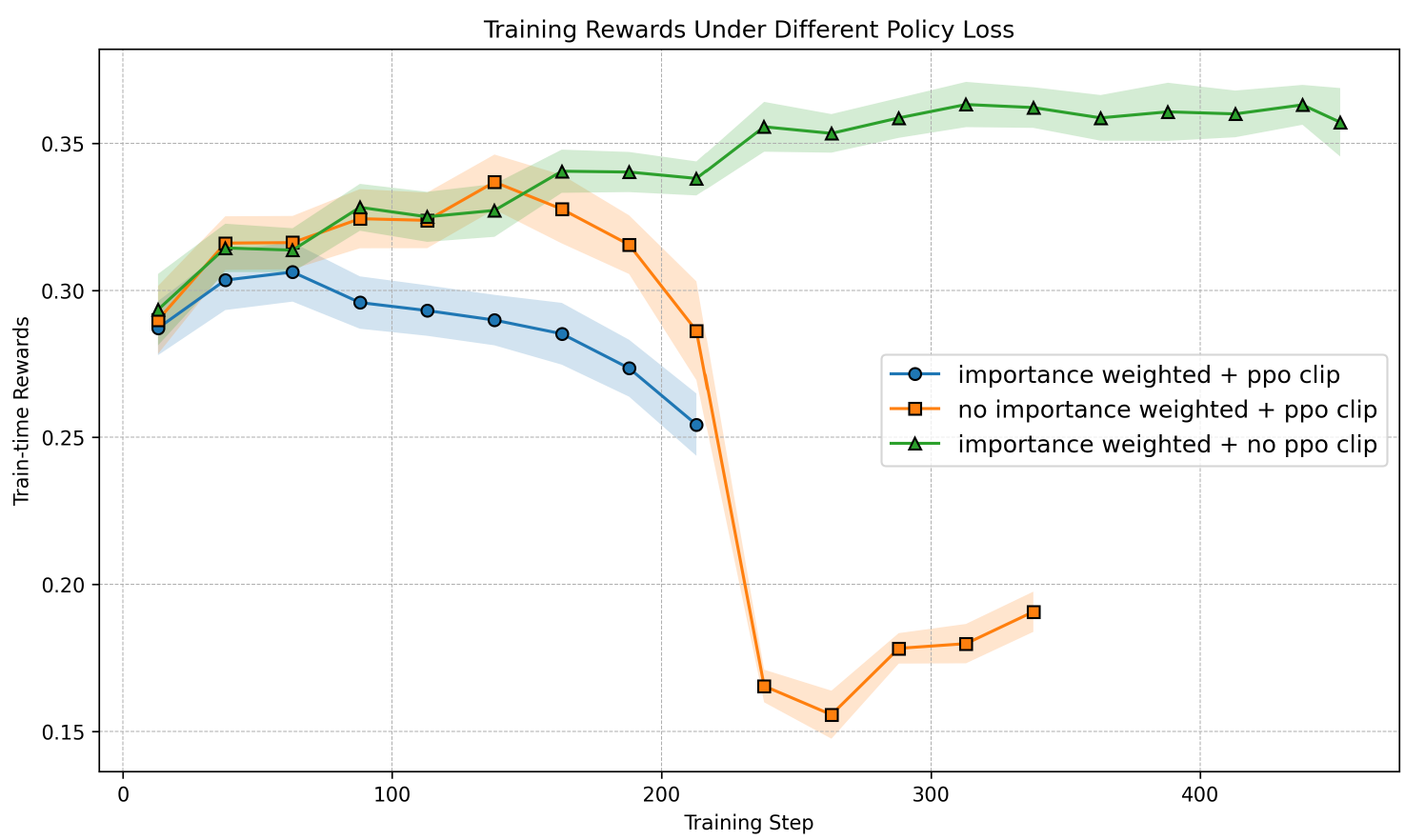}
    \caption{Comparison of train-time rewards under different policy loss computation when training with guided trajectories. The most stable training setup is when the importance weight is considered relative to the sampling distribution (prompt + guidance in context) and the typical ppo clip in the probability ratios is removed.}
    \label{fig:training-dynamics}
\end{figure}

\newpage

\section{Guided Trajectory Policy Reshaping}
\label{app:policy_reshape}

Inspired by Yan et al.~\cite{yan2025luffy}, we introduce a \emph{policy-reshaping factor} that re-weights gradient contributions from off-policy (guided) samples, with the goal of amplifying learning signals from low-probability tokens in guided rollouts.  

Let $w_i$ be the importance-weight ratio (as defined in Equation \ref{eq:J_guide}) for token $i$ in a guided trajectory $\mathbf{x}$, and let $\mathbf{w}=\{w_1,\dots,w_{|\mathbf{x}|}\}$.  While Yan et al.\ apply the static transform $f(x)=1/(1+x)$, we propose an \textbf{adaptive} alternative:

\[
f(w_i)\;=\;\frac{1}{P_{90}(\mathbf{w}) + w_i},
\]

where $P_{90}(\mathbf{w})$ denotes the 90th percentile of the ratios in $\mathbf{w}$.  This normalisation boosts the relative contribution of tokens whose ratios fall below the percentile threshold, while tempering the influence of outliers with extremely large ratios.  Because $P_{90}(\mathbf{w})$ is recomputed for every rollout, the reshaping adapts automatically to changes in the distributional gap between guided and unguided sampling throughout training. We experimented with 70, 80, and 90th percentiles and found that 90th percentile led to the best performance.

Preliminary experiments show modest but consistent gains on our validation benchmarks.  Nevertheless, a systematic ablation (e.g. exploring different percentile thresholds or coupling the factor with temperature-scaled guidance) remains future work.  We therefore present these results here in the Appendix for completeness rather than as a conclusive endorsement of the method.

\section{Full Performance Results}

\begin{table}[H]
    \caption{Pass@1 with greedy decoding and Pass@16 with temperature 1.0 performance with 95\% confidence intervals. The CI for pass@1 is captured from 5 indepedent runs and the CI for pass@16 is captured from the variance of the 16 samples per prompt.}
    \centering
    \small
    \setlength{\tabcolsep}{10pt}
    \renewcommand{\arraystretch}{1.15}
    \scalebox{0.85}{
    \begin{tabular}{llccccc}
        \toprule
        \textbf{Dataset} & \textbf{Metric} & \textbf{Guide-GRPO} & \textbf{Filter-GRPO} & \textbf{GRPO} & \textbf{SFT} &\textbf{Base} \\
        \midrule

        \multirow{2}{*}{MATH500}        & P@1  & 82.68 $\pm$ (0.10) & 80.80 $\pm$ (0.00) & 79.00 $\pm$ (0.00)  & 72.80 $\pm$ (0.00) & 68.80 $\pm$ (0.00) \\
                                        & P@16 & 93.60 $\pm$ (0.54) & 92.60 $\pm$ (0.57) & 90.80 $\pm$ (0.63)  & 87.00 $\pm$ (0.74) & 89.60 $\pm$ (0.67) \\ \addlinespace[3pt]
        \multirow{2}{*}{GSM8K}          & P@1  & 91.43 $\pm$ (0.12) & 91.84 $\pm$ (0.04) & 91.71 $\pm$ (0.04)  & 88.22 $\pm$ (0.06) & 83.21 $\pm$ (0.14) \\
                                        & P@16 & 97.73 $\pm$ (0.20) & 96.59 $\pm$ (0.24) & 96.97 $\pm$ (0.23)  & 96.89 $\pm$ (0.23) & 97.19 $\pm$ (0.22) \\ \addlinespace[3pt]
        \multirow{2}{*}{MINERVA}        & P@1  & 32.35 $\pm$ (0.14) & 30.51 $\pm$ (0.00) & 32.72 $\pm$ (0.00)  & 26.25 $\pm$ (0.18) & 26.47 $\pm$ (0.00)\\
                                        & P@16 & 47.43 $\pm$ (1.48) & 43.75 $\pm$ (1.47) & 45.96 $\pm$ (1.48)  & 35.29 $\pm$ (1.42) & 43.01 $\pm$ (1.47) \\ \addlinespace[3pt]
        \multirow{2}{*}{OLYMPIAD}       & P@1  & 43.11 $\pm$ (0.22) & 40.21 $\pm$ (0.12) & 39.56 $\pm$ (0.00)  & 35.05 $\pm$ (0.28) & 33.07 $\pm$ (0.11) \\
                                        & P@16 & 64.59 $\pm$ (0.90) & 60.89 $\pm$ (0.92) & 61.33 $\pm$ (0.92)  & 50.52 $\pm$ (0.94) & 52.15 $\pm$ (0.94) \\ \addlinespace[3pt]
        \multirow{2}{*}{AMC}            & P@1  & 63.61 $\pm$ (0.88) & 60.24 $\pm$ (0.00) & 62.41 $\pm$ (0.47)  & 50.36 $\pm$ (0.88) & 47.47 $\pm$ (0.58) \\
                                        & P@16 & 84.34 $\pm$ (1.95) & 84.34 $\pm$ (1.95) & 84.34 $\pm$ (1.95)  & 69.88 $\pm$ (2.47) & 78.31 $\pm$ (2.22) \\ \addlinespace[3pt]
        \multirow{2}{*}{AIME 24}        & P@1  & 30.67 $\pm$ (1.31) & 18.67 $\pm$ (1.60) & 26.67 $\pm$ (0.00)  & 11.33 $\pm$ (3.92) & 6.67  $\pm$ (0.00) \\
                                        & P@16 & 56.67 $\pm$ (4.43) & 53.33 $\pm$ (4.46) & 60.00 $\pm$ (4.38)  & 23.33 $\pm$ (3.78) & 33.33 $\pm$ (4.22) \\ \addlinespace[3pt]
        \multirow{2}{*}{AIME 25}        & P@1  & 13.33 $\pm$ (0.00) & 13.33 $\pm$ (0.00) & 13.33 $\pm$ (0.00)  & 6.67 $\pm$ (0.00) & 3.33  $\pm$ (0.00) \\
                                        & P@16 & 46.67 $\pm$ (4.46) & 33.33 $\pm$ (4.22) & 30.00 $\pm$ (4.10)  & 13.33 $\pm$ (3.04) & 13.33 $\pm$ (3.04) \\ 
        \midrule
         \midrule
         \multirow{2}{*}{Macro Avg.}     & P@1  & 51.03 $\pm$ (0.40) & 47.94 $\pm$ (0.25) & 49.34 $\pm$ (0.07)  & 40.42 $\pm$ (0.83) & 38.43 $\pm$ (0.12)  \\
                                         & P@16 & 70.15 $\pm$ (2.00) & 66.40 $\pm$ (1.98) & 67.06 $\pm$ (1.96)
  &  53.75 $\pm$ (1.80) & 58.13 $\pm$ (1.83)  \\ \addlinespace[3pt]
         \multirow{2}{*}{Micro Avg.}     & P@1  & 70.66 $\pm$ (0.00) & 69.76 $\pm$ (0.00) & 69.59 $\pm$ (0.00)
  &  63.80 $\pm$ (0.00) & 61.16 $\pm$ (0.00)  \\
                                         & P@16 & 83.29 $\pm$ (0.34) & 81.23 $\pm$ (0.35) & 81.44 $\pm$ (0.35)
  &  76.28 $\pm$ (0.39) & 78.31 $\pm$ (0.37)  \\
        \bottomrule
    \end{tabular}
    }
    \label{app:main_results_full}
\end{table}

\begin{table}[H]
    \centering
    \caption{Pass@1 with greedy decoding and Pass@16 with temperature 1.0 performance with 95\% confidence intervals. The CI for pass@1 is captured from 5 indepedent runs and the CI for pass@16 is captured from the variance of the 16 samples per prompt.}
    \scalebox{0.9}{
    \begin{tabular}{llcccc}
    \toprule
    \textbf{Benchmark} & \textbf{Metric} & \textbf{Guide-32B-8K} & \textbf{GRPO-32B-8K} & \textbf{Guide-7B-8K} & \textbf{GRPO-7B-8K}  \\
    \midrule
    \multirow{2}{*}{MATH500}     & P@1  & \textbf{85.08} $\pm$ (0.20) & 84.00          $\pm$ (0.12) & 83.08 $\pm$ (0.16) & 79.96 $\pm$ (0.08) \\
                                 & P@16 & 94.80          $\pm$ (0.49) & \textbf{95.20} $\pm$ (0.47) & 94.60 $\pm$ (0.50) & 94.40 $\pm$ (0.50) \\ \addlinespace[3pt]
    \multirow{2}{*}{GSM8K}       & P@1  & \textbf{95.60} $\pm$ (0.05) & 95.60          $\pm$ (0.00) & 92.25 $\pm$ (0.06) & 91.46 $\pm$ (0.04) \\
                                 & P@16 & \textbf{98.03} $\pm$ (0.19) & 97.88          $\pm$ (0.19) & 97.80 $\pm$ (0.20) & 98.18 $\pm$ (0.18) \\ \addlinespace[3pt]
    \multirow{2}{*}{MINERVA}     & P@1  & \textbf{35.59} $\pm$ (0.18) & 35.51          $\pm$ (0.18) & 29.41 $\pm$ (0.00) & 30.15 $\pm$ (0.00) \\
                                 & P@16 & \textbf{50.00} $\pm$ (1.49) & 49.63          $\pm$ (1.49) & 49.26 $\pm$ (1.49) & 48.53 $\pm$ (1.48) \\ \addlinespace[3pt]
    \multirow{2}{*}{OLYMPIAD}    & P@1  & \textbf{54.52} $\pm$ (0.44) & 48.06          $\pm$ (0.07) & 43.67 $\pm$ (0.17) & 42.37 $\pm$ (0.22) \\
                                 & P@16 & \textbf{71.11} $\pm$ (0.85) & 68.89          $\pm$ (0.87) & 66.37 $\pm$ (0.89) & 65.33 $\pm$ (0.90) \\ \addlinespace[3pt]
    \multirow{2}{*}{AMC}         & P@1  & \textbf{65.06} $\pm$ (0.88) & 63.61          $\pm$ (0.75) & 62.65 $\pm$ (0.00) & 53.73 $\pm$ (0.58) \\
                                 & P@16 & \textbf{87.95} $\pm$ (1.75) & \textbf{87.95} $\pm$ (1.75) & 90.36 $\pm$ (1.59) & 89.16 $\pm$ (1.67) \\ \addlinespace[3pt]
    \multirow{2}{*}{AIME 24}     & P@1  & \textbf{32.67} $\pm$ (1.31) & 20.00          $\pm$ (0.00) & 16.67 $\pm$ (0.00) & 26.67 $\pm$ (0.00) \\
                                 & P@16 & \textbf{66.67} $\pm$ (4.38) & 60.00          $\pm$ (4.22) & 63.33 $\pm$ (4.31) & 53.33 $\pm$ (4.46) \\ \addlinespace[3pt]
    \multirow{2}{*}{AIME 25}     & P@1  & \textbf{25.33} $\pm$ (3.33) & 23.33          $\pm$ (0.00) & 19.33 $\pm$ (1.31) & 20.67 $\pm$ (1.31) \\
                                 & P@16 & \textbf{50.00} $\pm$ (4.43) & 43.33          $\pm$ (4.43) & 43.33 $\pm$ (4.43) & 30.00 $\pm$ (4.10) \\ \midrule
    \multirow{2}{*}{Macro Avg.}  & P@1  & \textbf{56.26} $\pm$ (0.91) & 52.87          $\pm$ (0.16) & 49.58 $\pm$ (0.24) & 49.29 $\pm$ (0.32) \\ \addlinespace[3pt]
                                 & P@16 & \textbf{74.08} $\pm$ (1.94) & 71.84          $\pm$ (1.92) & 72.15 $\pm$ (1.91) & 68.42 $\pm$ (1.90) \\ \addlinespace[3pt]
    \multirow{2}{*}{Micro Avg.}  & P@1  & \textbf{76.36} $\pm$ (0.00) & 74.47          $\pm$ (0.00) & 71.15 $\pm$ (0.00) & 69.89 $\pm$ (0.00) \\
                                 & P@16 & \textbf{85.63} $\pm$ (0.32) & 84.94          $\pm$ (0.32) & 84.29 $\pm$ (0.33) & 83.84 $\pm$ (0.33) \\
    \bottomrule
    \end{tabular}
    }
    \label{app:main_results_full_large}
\end{table}

\section{Guidance Generation}
\label{app:guidance_generation}

To generate prompt-specific guidance, we used the prompt below at temperature 0 with GPT-4o.

\begin{Verbatim}[breaklines=true]
'''
You are an expert math tutor with years of experience helping students understand difficult concepts without solving problems for them. Your task is to analyze a math problem and its reference solution, then create a series of helpful hints that guide the student toward discovering the solution independently.

[question]: {question}

[reference_answer]: {reference_answer}

When creating hints, follow these principles:
- Start with conceptual hints that point to relevant mathematical principles
- Progress to more specific strategic hints about approach
- Offer guidance on potential roadblocks without revealing key steps
- Never provide direct calculations or formulas that would bypass the student's own reasoning
- Include 3-5 hints in ascending order of directness

[guidance]: 
1. [Conceptual hint focusing on relevant mathematical principles]
2. [Strategic hint about approach without revealing steps]
3. [Hint addressing common misconception or roadblock]
4. [More direct hint if needed]
5. [Final hint that guides toward the solution method without providing it]
'''
\end{Verbatim}

The following are some examples on the guidance generated using the prompt above: 

\begin{table}[H]
\centering
\renewcommand{\arraystretch}{1.1}
\begin{tabular}{|p{0.95\linewidth}|}
\hline
\vspace{1mm}
\textbf{Example A: (Algebra)}\\[-0.4em]
\begin{Verbatim}[fontsize=\scriptsize, breaklines=true]
User: Find $\\frac{a^{12}-4096}{64 a^{6}}$, if $\\frac{a^{2}}{4}-\\frac{4}{a^{2}}=3$.

A hint to the problem is provided below:

[HINT_START]
1. Notice that the given condition and the expression both involve terms of the form a² and its reciprocal. Think about how the identity (a²/4) - (4/a²) might be used as a building block for factoring the larger expression. 
2. Consider rewriting the numerator in a way that reveals a common factor with the given condition. Ask yourself if a difference of powers or a recognizable factorization pattern might help connect the two parts of the expression. 
3. It might be useful to make a substitution, such as letting x = a²/4, to help simplify both the given equation and the overall expression. How can this substitution simplify the structure of the problem? 
4. Once you have expressed the terms in a unified form using your substitution, think about how to factor or combine terms so that the given condition naturally appears in the algebraic manipulation. 
5. Finally, reflect on any familiar algebraic identities or factorizations you know that involve higher powers; these might help you write the expression in a form that can be directly evaluated using the given value. Use these observations to guide your step-by-step approach toward the final simplified result.
[HINT_END]
\end{Verbatim}
\\ \hline
\vspace{1mm}
\textbf{Example B: (Geometry)}\\[-0.4em]
\begin{Verbatim}[fontsize=\scriptsize, breaklines=true]
User:  The area of a new playground is square and measures $1600 \\mathrm{~m}^{2}$. How long is one side of the playground? How many meters of fence are needed for three sides?

A hint to the problem is provided below:

[HINT_START]
1. Think about what it means for a shape to be a square, and how its area relates to the length of one of its sides.
2. Consider how you might determine the side length from the given area without directly calculating it—what operation relates area and side length for a square?
3. Once you have an idea for finding the side length, think about how you would compute the total length of fence needed for three sides of a square playground. What does that imply about multiplying the side length?
4. Be cautious about mixing up the operations needed to go from area to side length and then to the fence perimeter. Ensure each step follows from the properties of a square.
5. Finally, try to translate the relationships into a step-by-step approach that first finds the side of the square and then uses that to calculate the overall fence length for three of its sides.
[HINT_END]
\end{Verbatim}
\\ \hline
\end{tabular}
\caption{Two sample hint blocks produced by our pipeline.}
\label{tab:hint_examples}
\end{table}

\begin{table}[H]
\centering
\renewcommand{\arraystretch}{1.1}
\begin{tabular}{|p{0.95\linewidth}|}
\hline
\textbf{Example C: Effect of Hints on Model Reasoning}\\
\hline

\textbf{(1) Question (no hints)}\\[-0.2em]
\begin{Verbatim}[fontsize=\scriptsize,breaklines=true]
In the right-angled triangle $ABC$, the lengths of the legs $AC=3$ and $BC=4$. Point $A$ was moved to point $A_{1}$ parallel to $BC$, then point $B$ was moved to point $B_{1}$ parallel to $A_{1}C$, and finally, point $C$ was moved to point $C_{1}$ parallel to $A_{1}B_{1}$, such that the resulting triangle $A_{1}B_{1}C_{1}$ is right-angled at $B_{1}$, and the length of the leg $A_{1}B_{1}$ is 1 unit. What is the length of the leg $B_{1}C_{1}$?'
\end{Verbatim}
\\ \hline
\textbf{(2) Model response without hints (incorrect)}\\[-0.2em]
\begin{Verbatim}[fontsize=\scriptsize,breaklines=true]
To solve this problem, we need to understand the geometric transformations and the properties of the right-angled triangles involved. Let's break down the problem step-by-step.\n\n1. **Initial Triangle \\(ABC\\)**:\n   - \\(AC = 3\\)\n   - \\(BC = 4\\)\n   - \\(AB = \\sqrt{AC^2 + BC^2} = \\sqrt{3^2 + 4^2} = 5\\)\n\n2. **Transformation of Points**:\n   - Point \\(A\\) is moved to \\(A_1\\) parallel to \\(BC\\).\n   - Point \\(B\\) is moved to \\(B_1\\) parallel to \\(A_1C\\).\n   - Point \\(C\\) is moved to \\(C_1\\) parallel to \\(A_1B_1\\).\n\n3. **Resulting Triangle \\(A_1B_1C_1\\)**:\n   - The triangle \\(A_1B_1C_1\\) is right-angled at \\(B_1\\).\n   - The length of the leg \\(A_1B_1 = 1\\).\n\n\hl{Since the points are moved parallel to the sides of the triangle, the resulting triangle} \\(A_1B_1C_1\\) is similar to the original triangle \\(ABC\\). The ratio of the corresponding sides of the triangles will be the same.\n\nLet's denote the length of the leg \\(B_1C_1\\) as \\(x\\). Since the triangles are similar, the ratio of the corresponding sides will be the same. The ratio of the sides of the triangles \\(A_1B_1C_1\\) and \\(ABC\\) is \\(1/5\\).\n\nSo, the length of the leg \\(B_1C_1\\) will be \\(4 \\times (1/5) = 4/5\\).\n\nLet's confirm this with Python and sympy.\n\n```python\nimport sympy as sp\n\n# Given lengths of the original triangle\nAC = 3\nBC = 4\nAB = sp.sqrt(AC**2 + BC**2)\n\n# The length of the leg A1B1 is given as 1\nA1B1 = 1\n\n# The ratio of the sides of the similar triangles\nratio = A1B1 / AB\n\n# The length of the leg B1C1\nB1C1 = BC * ratio\n\nprint(B1C1)\n```\n```output\n4/5\n```\nThe length of the leg \\(B_1C_1\\) in the resulting triangle \\(A_1B_1C_1\\) is \\(\\boxed{\\frac{4}{5}}\\).
\end{Verbatim}
\\ \hline
\textbf{(3) Question (with hints)}\\[-0.2em]
\begin{Verbatim}[fontsize=\scriptsize,breaklines=true]
In the right-angled triangle $ABC$, the lengths of the legs $AC=3$ and $BC=4$. Point $A$ was moved to point $A_{1}$ parallel to $BC$, then point $B$ was moved to point $B_{1}$ parallel to $A_{1}C$, and finally, point $C$ was moved to point $C_{1}$ parallel to $A_{1}B_{1}$, such that the resulting triangle $A_{1}B_{1}C_{1}$ is right-angled at $B_{1}$, and the length of the leg $A_{1}B_{1}$ is 1 unit. What is the length of the leg $B_{1}C_{1}$? A hint to the problem is provided below: [HINT_START] 1. Consider that moving points along lines parallel to a side of a triangle does not change the distance (or altitude) from that side, so the area remains constant from the original triangle to the resulting triangle. 2. Remember that for a right triangle the area can be found using half the product of its two legs. With one leg given as 1, think about how you can relate this familiar formula to the unchanged area of the triangle. 3. A common pitfall is to assume that moving the points alters side ratios or angle measures aside from the given right angle; keep in mind that the key is the invariant area. 4. Now, equate the area expressed in the original triangle with that expressed in the new right triangle, and set up an equation that will allow you to solve for the unknown leg length without doing detailed arithmetic. 5. Finally, confirm that the relationship you’ve set up is consistent with how translation preserves distances from the parallel lines—this should guide you to finding the missing leg length. [HINT_END] Consider the hint but start your solution from scratch and do not directly reference the hint (e.g. avoid phrases like "using the hint").
\end{Verbatim}
\\ \hline
\textbf{(4) Model response with hints (correct)}\\[-0.2em]
\begin{Verbatim}[fontsize=\scriptsize,breaklines=true]
To solve this problem, we need to use the fact that the area of the triangle remains constant before and after the movements of the points. Let's break down the problem step-by-step:\n\n1. **Initial Triangle \(ABC\)**:\n - \(AC = 3\)\n - \(BC = 4\)\n - The area of triangle \(ABC\) is \(\frac{1}{2} \times AC \times BC = \frac{1}{2} \times 3 \times 4 = 6\).\n\n2. **Final Triangle \(A_1B_1C_1\)**:\n - \(A_1B_1 = 1\)\n - Let \(B_1C_1 = x\)\n - The area of triangle \(A_1B_1C_1\) is \(\frac{1}{2} \times A_1B_1 \times B_1C_1 = \frac{1}{2} \times 1 \times x = \frac{x}{2}\).\n\nSince the area remains constant, we have:\n\[\n6 = \frac{x}{2}\n\]\nSolving for \(x\), we get:\n\[\nx = 12\n\]\n\nSo, the length of the leg \(B_1C_1\) is \(12\).\n\nLet's confirm this with Python code to ensure the result is accurate.\n```python\n# Given values for the initial triangle ABC\nAC = 3\nBC = 4\n\n# Calculate the area of the initial triangle ABC\narea_ABC = 0.5 * AC * BC\n\n# Given value for the final triangle A1B1C1\nA1B1 = 1\n\n# Let B1C1 be x\n# The area of the final triangle A1B1C1 is 0.5 * A1B1 * B1C1\n# Since the area remains constant, we have:\n# area_ABC = 0.5 * A1B1 * B1C1\n# 6 = 0.5 * 1 * x\n# x = 12\n\n# Calculate the length of B1C1\nB1C1 = 2 * area_ABC / A1B1\nprint(B1C1)\n```\n```output\n12.0\n```\nThe length of the leg \(B_1C_1\) is \(\boxed{12}\).
\end{Verbatim}
\\ \hline
\end{tabular}
\vspace{2mm}
\caption{Example where the model’s initial solution (without hints) is incorrect; after receiving guidance, the model corrects its reasoning and produces the right answer. Without guidance, the model incorrectly assumes that the three parallel translations produce complete triangle similarity, adopts a side-length ratio of \(1{:}5\), and consequently predicts \(B_{1}C_{1}=4\times\frac15=\tfrac45\). After the area-invariance hint is provided, the model abandons this similarity shortcut, equates the original area \(\tfrac12\!\cdot\!3\!\cdot\!4=6\) with the area of the translated right triangle \(\tfrac12\!\cdot\!1\!\cdot\!x\), and correctly derives \(x=B_{1}C_{1}=12\). This example shows how a concise, domain-specific hint can redirect the model’s reasoning and correct a systematic geometric error.}
\label{tab:model_transcript_with_hints}
\end{table}

\section{Training Details: Reinforcement Learning with Verifiable Rewards (RLVR) Training }
\label{app:rlvr_training_details}
Prior to RLVR training, we perform one epoch of supervised fine-tuning (SFT) using the AMPS dataset \cite{math500} on all Qwen 2.5 base models \cite{model:Qwen25} to ensure that the models produce reasoning-formatted outputs. The prompt used for both RLVR and SFT is shown below:

\begin{Verbatim}[breaklines=true]
'''
A conversation between user and assistant. 
The user asks a question, and the assistant solves it. 
The assistant first thinks about the reasoning process in the mind and then provides the user with the answer. 
The reasoning process and answer are enclosed within <think> </think> and <answer> </answer> tags, respectively, i.e., <think> reasoning process here </think> <answer> answer here </answer>. 

User: {{question}}

Assistant: 
'''
\end{Verbatim}

For RLVR, we gathered the publicly available verifiable rewards dataset into three broad splits. 
\begin{table}[H]
\caption{Training dataset composition for RLVR}
\centering
\begin{tabular}{l r}
\toprule
\textbf{Dataset} & \textbf{Number of Examples} \\
\midrule
Math   & 450,000 \\
Code   &  25,276 \\
STEM   &  38,958 \\
\midrule
Total  & 514,234 \\
\bottomrule
\end{tabular}
\vspace{0.3em}
\label{table:rlvr-dataset-stats}
\end{table}

For training, we used the following hyperparameters when running the open-source VeRL \cite{verl} package with GRPO:

\begin{table}[H]
\caption{Hyper-parameters for GRPO training}
\centering
\begin{tabular}{lcccccccc}
\toprule
{hyperparameter and settings} & {value} \\
\midrule
train batch size & 1024 \\
ppo mini batch & 512 \\
number rollouts per prompt & 8 \\
training steps & 256 \\
actor learning rate & 1e-6 \\
kl coeff & 0 \\
entropy coeff & 0 \\
prompt max length & 1024 \\
generation max length & 3072 \\
policy model temperature & 1 \\
optimizer & Adam \\
\bottomrule
\end{tabular}
\vspace{0.3em}
\label{table:rlvr-grpo-hyperparameters}
\end{table}

For SFT, we used the following hyperparameters, 
\begin{table}[H]
\caption{Hyper-parameters for SFT training}
\centering
\begin{tabular}{lcccccccc}
\toprule
{hyperparameter and settings} & {value} \\
\midrule
train batch size & 32 \\
training epochs & 1 \\
actor learning rate & 1e-5 \\
max context length & 4092 \\
optimizer & Adam \\
\bottomrule
\end{tabular}
\vspace{0.3em}
\label{table:rlvr-sft-hyperparameters}
\end{table}

\section{Guide-GRPO towards Mathematical Reasoning Details}
\label{app:guide-exp}

In this appendix, we discuss further experimental details for our results in \ref{sec:guide-exp}.

\subsection{Guidance Pass@K on Training Data} 

\paragraph{Dataset and Model Details} For the evaluation, we randomly sample 10k samples from OpenR1-Math-220k \cite{openr1} train subset and use the Qwen-2.5-Math-7B trained by \cite{yan2025luffy} for running inference. For these prompts we generate guidance using reference solution by using the prompt described in Appendix \ref{app:guidance_generation}. 

\paragraph{Capability Gain and Distillation} We compute the capability gain (C) and distillation set (D) from the model rollouts without guidance using the strategy described in Section \ref{sec:rlvr_sd}. For each set, we then measure pass @ k for the guided and non-guided model with samples generated with temperature 1.0. We see that the model with guidance can solve more unsolvable questions in comparsion to the non guided model and the number of questions also increases with rising k. Similarly, the guided model also shows significantly higher pass @ 1 with a rising trend with increasing k.

\subsection{GRPO and SFT Training}

\paragraph{Training Data} For the training data, we use the default subset of OpenR1-Math-220k, which contains 93.7K math reasoning tasks that have been sourced for several math competitions, textbooks and online forums. We use the prompt, the final answer, and the reference solution, which is the solution that is scraped from the corresponding source, not the llm-generated solution. In order to format the training data, we leverage the system prompt used by Yan et al. that encourages the model to first think through the problem and then provide an answer in boxed format \cite{yan2025luffy}:

\begin{Verbatim}[breaklines=true]
Your task is to follow a systematic, thorough reasoning process before providing the final solution. This involves analyzing, summarizing, exploring, reassessing, and refining your thought process through multiple iterations. Structure your response into two sections: Thought and Solution. In the Thought section, present your reasoning using the format: \"<think>\n {thoughts} </think>\n\". Each thought should include detailed analysis, brainstorming, verification, and refinement of ideas. After \"</think>\n,\" in the Solution section, provide the final, logical, and accurate answer, clearly derived from the exploration in the Thought section. If applicable, include the answer in \\boxed{} for closed-form results like multiple choices or mathematical solutions.
\end{Verbatim}

Using this system prompt, we simply apply the chat template with the user prompt to formulate the training prompt for GRPO training. 

\paragraph{Training hyperparameter} \label{app:guide_training_details} Table \ref{table:grpo-hyperparameters} contains the common training hyper-parameter for training the policy models under vanilla GRPO and Guide-GRPO and Table \ref{table:sft-hyperparameters} contains the training hyper-parameters for the SFT training on reference solutions:

\begin{table}[H]
\centering
\begin{tabular}{lcccccccc}
\toprule
{hyperparameter and settings} & {value} \\
\midrule
train batch size & 1024 \\
ppo mini batch & 512 \\
number rollouts per prompt & 8 \\
training epochs & 2 \\
actor learning rate & 1e-6 \\
kl coeff & 0 \\
entropy coeff & 0 \\
prompt max length & 1024 \\
generation max length & 3072 \\
policy model temperature & 1 \\
optimizer & Adam \\
\bottomrule
\end{tabular}
\vspace{0.3em}
\caption{Hyper-parameters for GRPO training}
\label{table:grpo-hyperparameters}
\end{table}

\begin{table}[H]
\centering
\begin{tabular}{lcccccccc}
\toprule
{hyperparameter and settings} & {value} \\
\midrule
train batch size & 32 \\
training epochs & 2 \\
actor learning rate & 1e-5 \\
max context length & 4092 \\
optimizer & Adam \\
\bottomrule
\end{tabular}
\vspace{0.3em}
\caption{Hyper-parameters for SFT training}
\label{table:sft-hyperparameters}
\end{table}

\paragraph{Training Setup} We fork the open-sourced VeRL \cite{verl} training package. We make the following modifications to the code for implementing Guide-GRPO:
\begin{itemize}
    \item Implement filtering of prompts for which all solution trajectories are all correct or all incorrect
    \item Adjusted importance weighting to calculate log probabilities relative to prompt plus guidance 
    \item Update the data-loaders to include the token ids for prompt with guidance
    \item Dynamic re-rolls from prompt groups for which all trajectories are incorrect using prompt plus guidance in the context of the policy model
\end{itemize}

\paragraph{Compute} \label{compute} All GRPO model training used 2 nodes for a total of 16 gpus with 88 CPU cores, 80 Gi GPU memory and 1.5TB system memory per node. The total GRPO training time ranged between 36 to 48 hours. The SFT training was done on 1 node with training time of approximately 6 hours.

\end{document}